\newtheorem{definition}{Definition}[section]
\newtheorem{proposition}{Proposition}[section]
\definecolor{darkgreen}{RGB}{0,150,0} 
\definecolor{darkred}{RGB}{200,0,0}   
\newcommand{\greencheck}{{\color{darkgreen} \ding{51}}} 
\newcommand{\redx}{{\color{darkred} \ding{55}}} 
\newcommand{\spectra}{SPECTra\xspace}
\newcommand{\shortto}{\clipbox*{{.50\width} 0pt {\width} {\height}} \textrightarrow}
\begin{document}

\title{SPECTra: Scalable Multi-Agent Reinforcement Learning with Permutation-Free Networks}

\author{%
\textbf{Hyunwoo Park}$^{1}$, Baekryun Seong$^{1}$, Sang-Ki Ko$^{1}$ \\
$^1$University of Seoul \\
}





\maketitle

\begin{abstract}
In cooperative multi-agent reinforcement learning (MARL), the permutation problem—where the state space grows exponentially with the number of agents—reduces sample efficiency. Additionally, many existing architectures struggle with scalability, relying on a fixed structure tied to a specific number of agents, limiting their applicability to environments with a variable number of entities. While approaches such as graph neural networks (GNNs) and self-attention mechanisms have progressed in addressing these challenges, they have significant limitations as dense GNNs and self-attention mechanisms incur high computational costs.
To overcome these limitations, we propose a novel agent network and a non-linear mixing network that ensure permutation-equivariance and scalability, allowing them to generalize to environments with various numbers of agents. Our agent network significantly reduces computational complexity, and our scalable hypernetwork enables efficient weight generation for non-linear mixing. Additionally, we introduce curriculum learning to improve training efficiency.
Experiments on SMACv2 and Google Research Football (GRF) demonstrate that our approach achieves superior learning performance compared to existing methods. By addressing both permutation-invariance and scalability in MARL, our work provides a more efficient and adaptable framework for cooperative MARL. Our code is available at \href{https://github.com/funny-rl/SPECTra}{\textbf{\color{blue}https://github.com/funny-rl/SPECTra}}.
\end{abstract}

\section{Introduction}
Reinforcement learning (RL) has achieved remarkable success in various domains such as games~\cite{MnihKSGAWR13,SilverHMGSDSAPL16,VinyalsBCMDCCPE19}, robotics~\cite{ZhaoQW20,KoberBP13}, and natural language processing ~\cite{ChristianoLBMLA17,RafailovSMMEF23, deepseek}. However, traditional RL aims to train a single agent, limiting its applicability to complex problems requiring interaction among multiple agents. To address this limitation, multi-agent reinforcement learning (MARL) has emerged as a framework that enables two or more agents to interact and learn to achieve a common goal. MARL offers significant potential to solve complex real-world challenges, such as dynamic resource allocation~\cite{NguyenKL18}, traffic control~\cite{Bazzan09}, and team sports~\cite{tizero}.

However, MARL faces significant challenges arising from the inherent complexity of multi-agent systems. One major hurdle is the scalability of the learning algorithms to varying numbers of agents. Traditional MARL algorithms using MLPs rely on fixed-size input representations~\cite{qmix}, making them ill-suited for training with dynamic agent populations~\cite{Dyma-CL}. In addition, the exponential growth rate of action space with the number of agents increases the exploration difficulty~\cite{pmic,mat}, forcing agents to take a long time to explore. Furthermore, entities an agent observes are implicitly ordered in sequence, leading to permutation problems where the state space grows exponentially with the number of agents. This reduces sample efficiency~\cite{pic}.

\begin{table}[ht!]
    \centering
    \caption{Comparison of the proposed framework with baselines in terms of two key properties.}
            \begin{tabular}{l  cc}
            \toprule
            \bf{Algorithm}  & \bf{Scalable} & \bf{Permutation-Free}   \\ \midrule
            \textbf{SPECTra-VDN}         & \greencheck       & \greencheck  \\ 
            HPN-VDN          & \redx       & \greencheck  \\
            UPDeT-VDN         & \greencheck        & \greencheck \\ \midrule
            \textbf{SPECTra-QMIX$^+$}       & \greencheck       & \greencheck\\
            HPN-QMIX       & \redx       & \redx  \\ 
            UPDeT-QMIX   & \redx       & \redx \\
            \bottomrule
            \end{tabular}
    \label{tab:algorithms_specifications}
\end{table}

\begin{figure*}[t!] 
\centering
\includegraphics[width=1.\textwidth]{./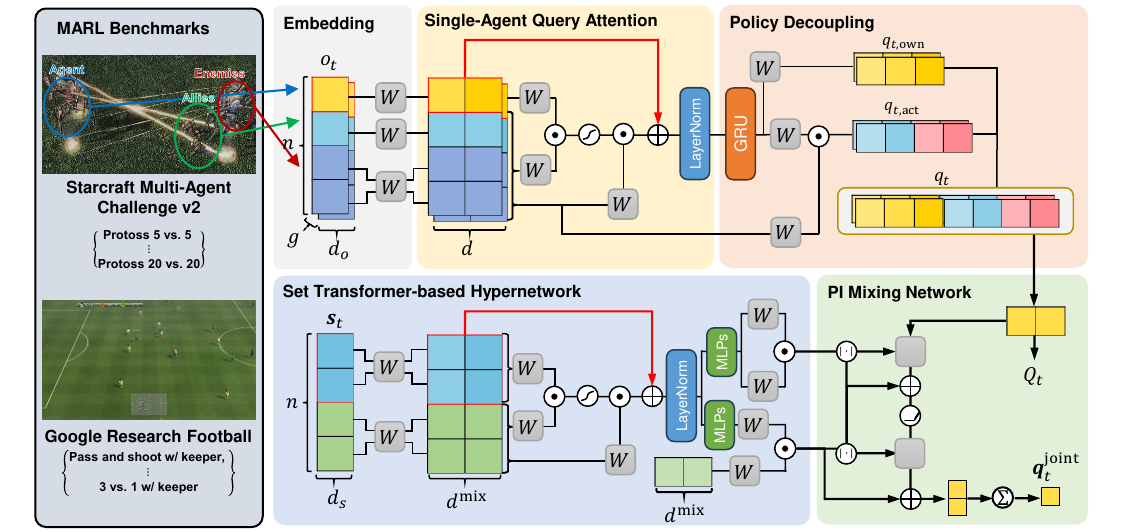} 
\caption{Overview of the proposed SPECTra framework. Gray boxes represent linear layers.}
\label{fig:wide2}
\end{figure*}

To address these issues, we propose the \underline{\bf S}calable and \underline{\bf P}ermutation-free \underline{\bf E}fficient \underline{\bf C}urriculum and \underline{\bf Tra}nsferable learning framework based on \underline{\bf Tra}nsformer architecture, called the SPECTra framework. The agent network and mixing network of \spectra allow the networks to learn independently of the variable number of agents, even with a fixed structure. The implicit ordering of observation and action permutations is solved using permutation equivariance and permutation invariance. As a result, the agent network and mixing network improve sample efficiency compared to traditional methods and can learn from dynamic agent populations.

For the agent network to solve scalability and permutation problems, we propose single-agent query attention (SAQA), which reduces the time complexity to $O(nd)$. SAQA utilizes cross-attention by querying only information relevant to the observer agent. Unlike self-attention, which processes interactions between the observed entities in all directions, this approach focuses only on entities with which the observer interacts, effectively improving scalability~\cite{sample_complexity}. It also utilizes the policy decoupling~\cite{UPDeT} to compute action values independently of the scalability and permutation problems.
We also design a set transformer-based hypernetwork (ST-HyperNet), which is a hypernetwork~\cite{HaDL16,qmix} that generates weight matrices and bias vectors of the mixing network to ensure the scalability and permutation-freeness of the mixing network while also maintaining the monotonicity of the network as in QMIX~\cite{qmix}. The agent network and mixing network can utilize their mathematical property to perform sample-efficient dynamic curriculum learning.

The main contributions of the paper are as follows:
\begin{itemize}
    \item We propose the scalable and permutation-free MARL framework called SPECTra by employing Transformer-based architecture in agent networks and hypernetworks.
    \item We provide a theoretical analysis of the inference time of the proposed framework, especially the permutation freedom and computational efficiency, and a qualitative analysis of the sample complexity of SAQA.
    \item We demonstrate that SPECTra outperforms baselines such as UPDeT and HPN in MARL benchmarks such as SMACv2 and GRF, further showing that our framework is well-suited to curriculum learning and transfer learning settings.
\end{itemize}

\section{Related Work}

\subsection{Multi-Agent Reinforcement Learning}

Many cooperative MARL algorithms have used a centralized training decentralized execution (CTDE)~\cite{CTDE,CTDE2} framework to allow multiple agents to achieve a common goal. CTDE is a training method that calculates centralized value using each agent's extra signals during training, and each agent chooses an action independently using only its observations during execution. VDN~\cite{VDN} and QMIX~\cite{qmix} are representative value-based CTDE algorithms. VDN utilizes additive value decomposition to represent a linear joint-action value function as the sum of individual action values. Linear joint-action value function solves some cooperation problems, such as the lazy agent, but a linear mixing network can represent only a limited class of centralized action-value functions. QMIX significantly improves the representation capability by redesigning the linear joint-action value function of VDN into a non-linear mixing network satisfying Individual-Global-Max (IGM)~\cite{qtran} using hypernetworks. In addition, QMIX with normalized optimization~\cite{qmix_sota} shows that QMIX can outperform QMIX's variants~\cite{qplex,qattan,WQMIX} in SMAC~\cite{SMAC} benchmark through regularized optimization. However, VDN and QMIX still can't solve scalability and permutation problems. 

\subsection{Permutation-Invariant and Equivariant Modeling for Multi-Agent Systems}


There have been many MARL frameworks that utilize permutation equivariance and invariance to address observation and action permutation problems~\cite{HazraDD24,E2GN2,MF-PPO}. Among the frameworks, HPN~\cite{hpn} has demonstrated an effective solution to the problems by independently embedding each observed entity using a permutation-equivariant hypernetwork. As a result, HPN has outperformed existing permutation-free architectures such as Deep Set~\cite{Deepset}, self-attention~\cite{transformer}, and GNNs~\cite{gnn_1,gcn,gnn_survey}. 

However, HPN inherently applies a summation operation for achieving the permutation invariance on independent entity embeddings, preventing it from selectively focusing on specific entities to select the optimal action. Additionally, integrating a non-linear mixing network like QMIX cannot guarantee permutation invariance due to its reliance on simple MLPs, which are not inherently permutation-invariant. Furthermore, both the agent network and the mixing network face challenges in transfer learning and dynamic curriculum learning, as their architectures are dependent on the number of agents.

\subsection{Dynamic Curriculum Learning}

Curriculum learning has emerged as a powerful paradigm for accelerating MARL. Using a staged learning approach, curriculum learning begins with more straightforward tasks and progressively transitions to more complex challenges. In MARL, one standard method for defining task difficulty involves varying the number of agents. Increasing the number of agents inherently introduces greater environmental dynamics and stochasticity, making coordination and collaboration among agents significantly more challenging~\cite{Factorized-q-learning}. For instance, Dyma-CL~\cite{Dyma-CL}, a GNN-based curriculum learning framework, applied curriculum learning to simple agent networks. Their findings demonstrate that even simple algorithms such as IQL~\cite{IQL} and VDN can achieve more efficient training by leveraging curriculum learning by gradually increasing agents instead of training from scratch.
Recent advancements in automatic curriculum learning~\cite{SPC-CL,EPC-CL,VACL} have introduced frameworks that automatically generate and adapt curricula by dynamically assigning the number of agents. These developments underscore the potential of designing models capable of adapting to varying agent counts, motivating further exploration into architectures that remain invariant to the number of agents.

A notable related effort is the UPDeT~\cite{UPDeT} framework, which, while not explicitly incorporating curriculum learning, offers an important example of a universal MARL pipeline. UPDeT employs transformer-based self-attention mechanisms to handle observations and action configurations of dynamic sizes using a unified architecture. This design partially enables zero-shot learning across agent configurations. However, UPDeT's reliance on dense self-attention connections imposes an $O(n^2d)$ computational complexity, and MLPs-based hypernetwork limits UPDeT's scalability of non-linear mixing networks. Moreover, self-attention computations significantly increase sample complexity due to dense connections, which diminishes the benefits of scalability~\cite{sample_complexity}.

\section{Background}

\subsection{Problem Formulation}

We formulate the MARL problem as a decentralized partially observable Markov decision process (Dec-POMDP)~\cite{dec-pomdp,dec-pomdp-book} with a 8-tuple~$M = \langle S, A, U, P, r, O, n, \gamma \rangle$. Let $S$ be the global state of the environment, $A = \{ a_1, \dots, a_n\}$ be the set of $n$ entities, and $U$ is the action space. Since an agent is an independent decision-making entity that interacts with the environment, we denote the set of agents by $A_{\rm agent}$ and the number of agents by $m$. Namely, $|A_{\rm agent}| = m$, where $n > m$.

We denote the {\em entity-wise observation} from entity $a_1$ to entity $a_2$ by $o_{1\shortto2} \in O$. Then, the local observation of entity $a_i$ is denoted by a set of observations $o_i = \{o_{i \shortto j} \mid j \in [n] \}$. 
An agent $a_i$ selects an action $u \in U$ at each time step $t$, and actions selected by $m$ agents form a joint action $\textbf{u} \in \textbf{U} \equiv U^m$.
We suppose that each agent $a_i$ has its own action space $U_i$, which is independent of the permutation of agents.
We also let $P(s' | s, \mathbf{u}): S \times \textbf{U} \times S \to [0, 1]$ denote the state transition probability from $s$ to $s'$ given the joint action~$\mathbf{u}$. The immediate reward function $r(s, \mathbf{u}): S \times \mathbf{U} \to R$, while $\gamma \in [0, 1)$ is a discount factor. At time step $t$, an agent $a_i$ observes a joint observation $o_{i,t}$ and takes an action $u_t \in U_i$ using the policy $\pi_\theta(u_t|O_t)$ parameterized by $\theta$. 
Now we can define the joint action value at time step $t$ following the policy $\pi_\theta$ as follows:
\[
Q^{\pi_\theta}(s_t, \textbf{u}_t) = \mathbb{E}[R_t | s_t, \mathbf{u}_t],
\]
where $R_t = \sum_{i=0}^\infty \gamma^i r_{t+i}$ is the discounted return.

\subsection{Permutation Invariance / Permutation Equivariance}



In MARL, the order of agents can change, and it leads to an explosion in the number of possible input representations due to permutations. In the worst case, a naive approach would require considering all $n!$ possible permutations. Moreover, the output complexity also increases as the joint action space scales as $U^m$.

Let us first formally define the PI and PE attributes from the perspective of MARL as follows:
\begin{definition}[permutation]
    Let $S_n$ be the set of all bijective functions (permutations) from the set $\{1, 2, \ldots, n\}$ to itself.
\end{definition}

\begin{definition}[permutation equivariance]
A function $f : X^n \to Y^n$ is permutation equivariant iff for any permutation $\sigma \in S_n$ and an input matrix $X \in \mathbb{R}^{n \times d}$, $f (P_\sigma X) = P_\sigma f(X),$ where $P_\sigma \in \mathbb{R}^{n \times n}$ is the permutation matrix corresponding to $\sigma$, which acts on $X$ by permitting its rows.
\end{definition}

\begin{definition}[permutation invariance]
A function $f : X^n \to Y^n$ is permutation invariant iff for any permutation $\sigma \in S_n$ and an input matrix $X \in \mathbb{R}^{n \times d}$, $f (P_\sigma X) = f(X)$, where $P_\sigma \in \mathbb{R}^{n \times n}$ is the permutation matrix corresponding to $\sigma$.
\end{definition}

To ensure that the overall framework remains permutation-free, it is not sufficient for the model architecture alone to be permutation-invariant or permutation-equivariant. The structure of observations and states must also meet specific conditions as follows:
\begin{enumerate}
\item First, each agent has a fixed index for encoding its own observation, meaning that the way an agent perceives itself does not change regardless of how other agents are ordered.
\item Second, when choosing an action, an agent can either take an action that directly affects itself (e.g., move) or an action that interacts with another entity (e.g., attack).
\item Lastly, since observations are represented as decomposable vectors, entities that are not directly observable due to partial observability should be masked, ensuring that missing information does not introduce unintended dependencies.
\end{enumerate}

By maintaining these structural constraints, the entire learning process remains permutation-free, leading to more stable and efficient training.

\section{Main Contributions}

This section describes the agent network and mixing network of \spectra in detail. First, we explain the important elements of the agent network: SAQA and action value estimation via policy decoupling. Then, prove that the agent network is scalable and permutation-equivariant. We also use a Set Transformer~\cite{set-transformer}-based hypernetwork called the ST-HyperNet to show that non-linear mixing networks can estimate joint-action values in a permutation-invariant way and prove that the mixing network is scalable and permutation-invariant.

\subsection{Lightweight Attention Mechanism: Single-Agent Query Attention (SAQA)}
\begin{figure}[h!]
    \centering
    \begin{subfigure}{0.3\linewidth}
        \centering
        \includegraphics[width=\textwidth]{./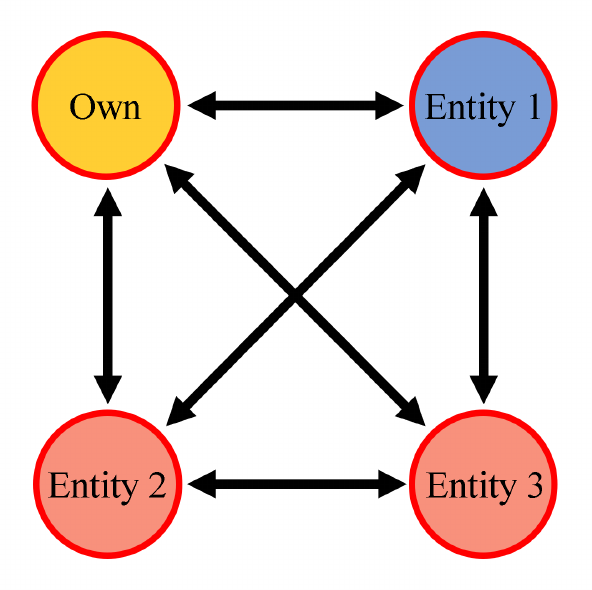}
        \caption{Self-attention}
        \label{fig:dense_connection}
    \end{subfigure}
    \hspace{0.5cm}
    \begin{subfigure}{0.3\linewidth}
        \centering
        \includegraphics[width=\textwidth]{./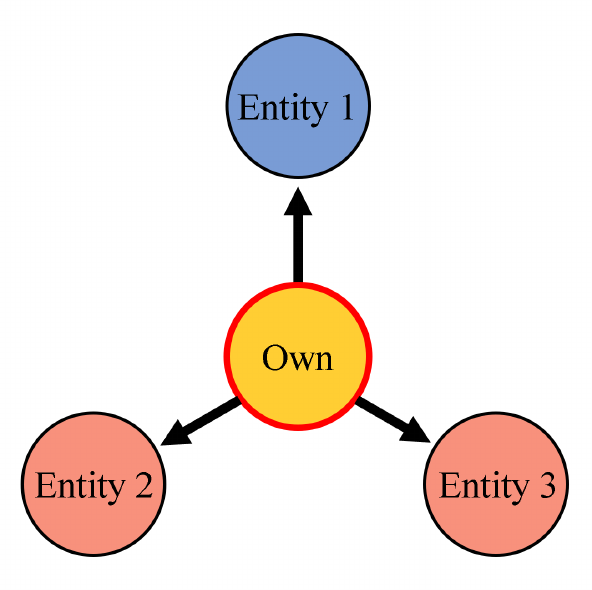}
        \caption{SAQA (ours)}
        \label{fig:sparse_connection}
    \end{subfigure}
    \caption{Comparison of connection density between self-attention and single-agent query attention.}
    \label{fig:attention_comparison}
\end{figure}

To address the high computational cost of the self-attention mechanism, there have been alternative attention mechanisms proposed to reduce its complexity from $O(n^2d)$ to $O(nd^2)$, such as Efficient Attention~\cite{Efficient_attention} and Hydra-Attention~\cite{Hydra-attention}. However, these algorithms have limited applicability because they require the hidden dimension $d$ to be smaller than the number $n$ of agents to be efficient. These limitations highlight the need for more tailored approaches to efficient attention mechanisms in MARL, ensuring computational scalability while preserving critical agent-centric information. 

On the other hand, we propose a lightweight attention mechanism called {\em single-agent query attention (SAQA)}, which performs attention operations with a single agent information as a query and contextualizes the agent's embedding with more relevant observations using the attention mechanism. In this way, we can achieve the computational cost of $O(nd)$ instead of $O(n^2d)$. Figure~\ref{fig:attention_comparison} Shows the difference in structural complexity between self-attention and SAQA. 

Given the set of observations $o_i = \{o_{i \shortto j} \mid j \in A_{i}^{\rm obs}\}$ from agent $a_i$, we first embed the observations via three separate linear layers $E_{\rm own}$, $E_{\rm ally}$, and $E_{\rm enemy}$ follows:
\begin{align*}
e_{i}  = &\; \{ E_{\rm own}(o_{i \shortto i}) \} \cup \{ E_{\rm ally} (o_{i \shortto j}) \mid j \in A_{\rm ally}\} \\
& \cup  \{ E_{\rm enemy} (o_{i \shortto j}) \mid j \in A_{\rm enemy}\} \subset \mathbb{R}^{d_h}.
\end{align*}


Note that $E_{\rm *}(o_{i \shortto j}) \in \mathbb{R}^{d_h}$ is a vector that encodes the observation of entity $a_j$ from agent $a_i$ and defined as $ o_{i \shortto j} \mathbf{W}_*$, where $\mathbf{W}_* \in \mathbb{R}^{d_o \times d_h}$ for $* \in \{{\rm own}, {\rm ally}, {\rm enemy}\}$. Note that `$*$' is determined by the type of entity $a_j$ from $a_i$'s perspective (e.g., $* = \textrm{own}$ if $i = j$). Let us denote $E_{\rm *}(o_{i \shortto j})$ by $e_{i \shortto j}$ for simplicity.

Let $n_h$ be the number of attention heads in each layer. Then, $n_h \times d_k = d_h$, where $d_k$ is the hidden dimension of each attention head. We compute the query, key, and value vector for $k$-th attention head by applying a linear transformation from $d_h$ to $d_k$ as follows:
\[
Q_{i\shortto j}^{(k)}, K_{i\shortto j}^{(k)}, V_{i\shortto j}^{(k)} = e_{i \shortto j} \mathbf{W}_Q^{(k)}, e_{i \shortto j} \mathbf{W}_K^{(k)}, e_{i \shortto j} \mathbf{W}_V^{(k)}.
\]

Now, we compute the attention-weighted vector for agent $i$:
\[
e_{i, {\rm att}}^{(k)}= \text{softmax}\left(\frac{Q_{i\shortto i}^{(k)}(\textbf{K}_{i}^{(k)})^\top}{\sqrt{d_k}}\right)\textbf{V}_{i}^{(k)} \in\mathbb{R}^{d_k }.
\]

We concatenate the vectors from $d_h$ heads as follows:
\[
e_{i, {\rm att}} = 
\begin{bmatrix}
e_{i, {\rm att}}^{(1)} & \dots & 
e_{i, {\rm att}}^{(d_h)}
\end{bmatrix}^\top
\in \mathbb{R}^{d_h}
\]

Lastly, we apply layer normalization after combining the attention-weighted vector with the input query embedding via a residual connection:
\[
e_i' = {\rm LayerNorm}(e_{i \shortto i} + e_{i, {\rm att}}) \in \mathbb{R}^{d_h}.
\]

Remark that the proposed SAQA can be performed with a time complexity of $O(nd)$. We can prove that the entire attention computation is permutation-invariant.

\begin{proposition}\label{prop:saqa_proposition}
    The single-agent query attention of the \spectra framework is a permutation-invariant function.
\end{proposition}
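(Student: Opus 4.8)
The plan is to show that the map $o_i \mapsto e_i'$, regarded as a function of the \emph{set} of entity-wise observations $\{o_{i\shortto j} \mid j \in A_i^{\rm obs}\}$, does not depend on the order in which the non-self entities are listed. First I would fix the observer agent $a_i$ and isolate the only index with a privileged role: the self-observation $o_{i\shortto i}$, which is embedded by the dedicated layer $E_{\rm own}$ and is the sole source of the query $Q_{i\shortto i}^{(k)}$ as well as of the residual term $e_{i\shortto i}$. Because the query is built exclusively from $o_{i\shortto i}$, it is manifestly invariant to any permutation $\sigma \in S_n$ acting on the remaining entities. So the whole argument reduces to showing that the attention output $e_{i,{\rm att}}^{(k)} = \operatorname{softmax}\!\left(\frac{Q_{i\shortto i}^{(k)}(\mathbf{K}_i^{(k)})^{\top}}{\sqrt{d_k}}\right)\mathbf{V}_i^{(k)}$ is permutation-invariant in the ordering of the key/value rows.

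Next I would make precise that a permutation $\sigma$ of the observed entities acts on the stacked matrices by the row-permutation matrix $P_\sigma$: $\mathbf{K}_i^{(k)} \mapsto P_\sigma \mathbf{K}_i^{(k)}$ and $\mathbf{V}_i^{(k)} \mapsto P_\sigma \mathbf{V}_i^{(k)}$, where the per-row embeddings $e_{i\shortto j}$ — and hence the per-row keys and values — depend only on $o_{i\shortto j}$ and on the \emph{type} of $a_j$ (own/ally/enemy), never on its position in the list, per the structural constraints listed before the proposition. Then the score vector transforms as $Q_{i\shortto i}^{(k)}(P_\sigma \mathbf{K}_i^{(k)})^{\top} = \bigl(Q_{i\shortto i}^{(k)}(\mathbf{K}_i^{(k)})^{\top}\bigr)P_\sigma^{\top}$, i.e. the logits are merely reordered by $\sigma$; since $\operatorname{softmax}$ is applied coordinate-wise and normalizes over all entries, it commutes with this reordering, giving a softmax weight (row) vector $w P_\sigma^{\top}$. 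Multiplying by $P_\sigma \mathbf{V}_i^{(k)}$ yields $w P_\sigma^{\top} P_\sigma \mathbf{V}_i^{(k)} = w \mathbf{V}_i^{(k)}$ because $P_\sigma$ is orthogonal ($P_\sigma^{\top}P_\sigma = I$). Hence each $e_{i,{\rm att}}^{(k)}$ is unchanged, the concatenation $e_{i,{\rm att}}$ is unchanged, and finally $e_i' = {\rm LayerNorm}(e_{i\shortto i} + e_{i,{\rm att}})$ is unchanged since both of its arguments are.

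I would also note the small points that need stating for completeness: multi-head concatenation preserves invariance because every head is individually invariant and the heads are combined in a fixed order independent of $\sigma$; the residual and LayerNorm act on the (fixed-dimension) vector $e_{i\shortto i} + e_{i,{\rm att}}$ and involve no cross-entity aggregation, so they cannot reintroduce order dependence; and the masking of unobservable entities is consistent under $\sigma$, so restricting attention to $A_i^{\rm obs}$ is itself permutation-respecting. The main obstacle is not any single computation — each is routine — but rather making airtight the claim that the set-to-row-matrix encoding genuinely carries the $S_n$-action as $X \mapsto P_\sigma X$, i.e. that no step (the three type-specific embedding layers, the shared $\mathbf{W}_Q,\mathbf{W}_K,\mathbf{W}_V$, the mask) secretly uses the list index; this is exactly where the three structural constraints stated earlier are invoked, and the proof should cite them explicitly rather than treat the encoding as a black box.
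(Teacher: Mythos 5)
Your proof is correct: the key chain of observations --- the query is built solely from $o_{i\shortto i}$, a permutation of the observed entities acts on the stacked keys and values as $P_\sigma \mathbf{K}_i^{(k)}$ and $P_\sigma \mathbf{V}_i^{(k)}$, softmax commutes with the coordinate reordering, and $P_\sigma^{\top}P_\sigma = I$ cancels the permutation before the residual and LayerNorm --- is exactly the intended argument. The paper actually states this proposition without any written proof (the later policy-decoupling proof just cites it), so your write-up supplies the standard attention-pooling-with-fixed-query argument that the authors leave implicit, and it is sound.
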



\subsection{Permutation-Free Policy Decoupling} \label{sec:PD}

Remark that the actions against the other entities should be determined in a permutation-equivariant manner, as the input order should not affect the optimal decision-making of the current agent. We can design the policy network of each agent as a permutation-equivariant one by utilizing the permutation-invariant contextualized embedding $e_i'$ from SAQA and the individual entity observations that are also free from the permutation problem.

We improve the policy decoupling proposed by UPDeT~\cite{UPDeT} to output action values permutation-equivariant using linear layers and inner products. Our policy decoupling uses the attention-based embedding of the current agent and the embedding of each observation-entity to compute action values for entities and optimize the policy at an action-group level.

To compute action values based on the action-observation history of agent $a_i$, we employ the GRU cell as follows:
\[
h_{i, t} = \text{GRUCell}(e_i', h_{i, t-1}) \in{\mathbb{R}^{d}}.
\]
From the hidden state $h_{i, t}$, we compute two vectors: one $(Q_{{i,t, {\rm own}}})$ for computing the action value for the actions on the agent itself ($a_i$) and the other $(Q_{{i, t, {\rm act}}})$ for the actions on other agents (including enemies and allies) as follows:
\begin{align*}
    q_{{i, t, {\rm own}}} &=   h_{i,t} \textbf{W}_{\rm own}^Q \in \mathbb{R}^{|U_{\rm own}|}\\
    Q_{{i, t, {\rm act}}}^{(k)} &=  e_i  \textbf{W}_{\rm act}^{(k),Q} \in \mathbb{R}^{d_k} \textrm{ for }1 \le k \le d_h.
\end{align*}

Now we apply a linear transformation to $n$ embedded 
observations as follows:
\[
\textbf{K}_{i, t, \text{act}} = 
\left[
    e_{i \shortto i}^\top; e_{i \shortto 1}^\top;  \dots; e_{i \shortto i-1}^\top;  e_{i \shortto i+1}^\top; \dots; e_{i \shortto n}^\top
\right]
\textbf{W}_{\rm act}^K \in \mathbb{R}^{n \times d_k}.
\]

Now we are ready to compute the $q$-values for the actions that can be performed on the other agents by conducting the dot product operations between a $Q$-vector and $K$-matrix:
\[
q_{i, t, \text{act}} = \sum_{k=1}^{d_h}\frac{Q_{{i, t, {\rm act}}}^{(k)} \left(\textbf{K}_{i, t, \text{act}}^{(k)} \right)^\top}{d_h \sqrt{d_k}} \in \mathbb{R}^{n}.
\]


Note that the $q$-vector is the result of the scaled inner product of $Q_{{i, t, {\rm act}}}^{(k)}$ and $\textbf{K}_{i, t, \text{act}}^{(k)}$. The $K$-matrix represents the latent information of individual observations of all agents. By applying the inner product, the observer agent can estimate the value of taking action on the other agents based on its observations.

Finally, we have the action value of $a_i$ at $t$ as follows:
\begin{align*}
q_{i, t} &= 
[ q_{{i, t, {\rm own}}}; q_{i, t, \text{act}}]^\top \in \mathbb{R}^{|U|}\\
q^{\text{mask}}_{i, t} &= \mathbf{Mask}_{i,t,\text{act}} \odot q_{i,t} + (1 - \mathbf{Mask}_{i,t,\text{act}}) \odot (-\infty),
\end{align*}
where $\mathbf{Mask}_{i,t,\text{act}} \in \{0,1\}^{|U|}$.

We can use the inner product to output $q_{i, t, \text{act}}$ that is permutation-equivariant to the permutations of allies and enemies, and $\textbf{W}_{\rm own}^Q$ to make permutation-independent $q_{{i, t, {\rm own}}}$ permutation-invariant.

\begin{proposition}
The policy decoupling of \spectra framework is a permutation-equivariant function.
\end{proposition}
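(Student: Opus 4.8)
# Proof Proposal for Proposition 2 (Permutation-Equivariance of Policy Decoupling)

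The plan is to show that when the rows of the observation embedding matrix are permuted by some $\sigma \in S_n$ acting only on the "other-entity" indices (allies and enemies), the output action-value vector $q_{i,t,\text{act}}$ is permuted by the same $\sigma$, while $q_{i,t,\text{own}}$ is unchanged. I would first fix agent $a_i$ and treat its own index as a distinguished coordinate that is held fixed under permutation, consistent with the structural constraint already stated in the excerpt (each agent has a fixed index for encoding its own observation). Then I would track each quantity through the pipeline in order: the GRU hidden state $h_{i,t}$, the query vectors $q_{i,t,\text{own}}$ and $Q_{i,t,\text{act}}^{(k)}$, the key matrix $\mathbf{K}_{i,t,\text{act}}$, and finally the dot-product output $q_{i,t,\text{act}}$.

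The key steps are as follows. First, $h_{i,t} = \mathrm{GRUCell}(e_i', h_{i,t-1})$ depends only on $e_i'$, which by Proposition~\ref{prop:saqa_proposition} is permutation-invariant; hence $h_{i,t}$ is invariant to any permutation of the observed entities. Consequently $q_{i,t,\text{own}} = h_{i,t}\mathbf{W}_{\rm own}^Q$ is permutation-invariant, and $Q_{i,t,\text{act}}^{(k)} = e_i \mathbf{W}_{\rm act}^{(k),Q}$ is likewise invariant (here I would note that $e_i$ in this expression should be read as the per-agent contextualized query, not the full set, so the same invariance argument applies). Second, the key matrix $\mathbf{K}_{i,t,\text{act}}$ is formed by stacking the per-entity embeddings $e_{i\shortto j}$ as rows in a fixed order and right-multiplying by the shared matrix $\mathbf{W}_{\rm act}^K$; since the same linear map is applied row-wise, permuting the input rows by $P_\sigma$ permutes the output rows by $P_\sigma$, i.e. this step is permutation-equivariant. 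Third, since $q_{i,t,\text{act}}^{(k)} = Q_{i,t,\text{act}}^{(k)} (\mathbf{K}_{i,t,\text{act}}^{(k)})^\top$ is a matrix–vector product with the $Q$-vector invariant and the $K$-matrix equivariant, the resulting vector inherits the row permutation: $(P_\sigma \mathbf{K})^\top$ applied to a fixed $Q$ gives $(Q \mathbf{K}^\top) P_\sigma^\top$, i.e. the coordinates of $q_{i,t,\text{act}}$ are permuted by $\sigma$. Summing over heads $k$ with shared scalar weights preserves this. Finally, the concatenation $q_{i,t} = [q_{i,t,\text{own}}; q_{i,t,\text{act}}]^\top$ is equivariant on the whole action space because the own-action block is invariant (it corresponds to the fixed self-index) and the act block is equivariant; the masking step is an element-wise operation where $\mathbf{Mask}_{i,t,\text{act}}$ is defined per entity, so it commutes with the permutation.

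The main obstacle I expect is handling the bookkeeping at the interface between the "own" coordinates and the "act" coordinates cleanly: the permutation $\sigma$ acts on the entity set $\{1,\dots,n\}$ but must fix $i$ (and, strictly, should be understood as acting within the ally block and within the enemy block separately, since allies and enemies are embedded by different linear maps $E_{\rm ally}$ and $E_{\rm enemy}$). I would make this precise by restricting attention to permutations in the subgroup $S_{\rm ally} \times S_{\rm enemy} \leq S_n$ that fix $i$, and note that the different embedding maps do not break equivariance because within each block a single shared map is used. A secondary subtlety is confirming that $e_i$ appearing in the $Q_{i,t,\text{act}}^{(k)}$ line is genuinely a permutation-invariant object; I would clarify the notation and lean on Proposition~\ref{prop:saqa_proposition}. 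Beyond these, the argument is a routine propagation of the invariance/equivariance properties through linear maps, row-wise stacking, and inner products.
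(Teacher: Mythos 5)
Your proposal is correct and follows essentially the same route as the paper's own (much terser) proof: invoke the permutation-invariance of the SAQA output $e_i'$ to get invariance of $h_{i,t}$, $q_{i,t,\mathrm{own}}$, and the act-query vectors, observe that the row-wise stacked entity embeddings are permutation-equivariant under a shared linear map, and propagate through the inner product. Your version is considerably more careful than the paper's two-line argument, particularly in restricting to permutations that fix the self-index and act within the ally and enemy blocks separately, which the paper leaves implicit.
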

    
\begin{proof}
    Since proposition~\ref{prop:saqa_proposition}, $e_i'$ is permutation invariant. Thus, we only need to inspect $M_{\text{obs},t}^i$. Expression $e_i$ are permutation equvariant. As other variables are not affected by observation permutation, the proposed policy decoupling method is permutation equivariant.
\end{proof}

\subsection{Set Transformer-based Hypernetwork}
\label{sec:st-hypernet}

Hypernetworks~\cite{HaDL16} refer to a class of neural architectures where one neural network generates the weights of another network. 
To ensure the permutation-equivariance of the mixing network, we employ the Set Transformer~\cite{set-transformer} that consists of the permutation-equivariant encoder and the permutation-invariant decoder.

First we compute the query ${\bf Q}_{i, t}^{W, (k)}$ and key ${\bf K}_{i, t}^{W, (k)}$ values for the agent $a_i$ by applying a linear transformation to the state representation of $a_i$. Then, ST-HyperNet computes the weight matrix and the bias vector of the mixing network as follows:
\begin{align*}
 W^\text{out}_t &= \sum_{k=1}^{d_h} \left|  \frac{{\bf Q}_{i, t}^{W, (k)}\left({\bf K}_{i,t}^{W, (k)} \right)^\top}{d_h \sqrt{d^\text{mix}/ d_h}} \right| \in \mathbb{R}_+^{m \times m}\\
 b^\text{out}_t &= \sum_{k=1}^{d_h}\frac{Q_{i ,t}^{b, (k)} \left({\bf K}_{i, t}^{b, (k)} \right)^\top}{d_h \sqrt{d^{\text{mix}}}} \in \mathbb{R}^{m}
\end{align*}

As in QMIX~\cite{qmix}, absolute value operations are applied to the weights $W^\text{out}_t$ of the mixing network to ensure monotonic value decomposition when computing the joint value function. Additionally, inspired by the decoder of the Set Transformer, we generate the permutation-equivariant bias vector~$ b^\text{out}_t$ using a random seed vector.

\begin{proposition}
The ST-HyperNet of the \spectra framework is a permutation-equivariant function.
\end{proposition}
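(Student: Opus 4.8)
The plan is to track how a permutation $\sigma \in S_m$ acting on the $m$ agents (i.e.\ on the rows/columns of the mixing-network weight matrix and on the entries of the bias vector) propagates through the ST-HyperNet computation, and to verify that the output transforms as $W^\text{out}_t \mapsto P_\sigma W^\text{out}_t P_\sigma^\top$ and $b^\text{out}_t \mapsto P_\sigma b^\text{out}_t$, which is the correct notion of equivariance for objects that live on the agent index set. First I would make explicit what the ``state representation of $a_i$'' that feeds the query/key maps is: it is a per-agent (entity-wise) encoding, so the stacked input to the Set Transformer encoder is a matrix whose rows are indexed by agents; permuting agents permutes these rows by $P_\sigma$. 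The key structural fact I would then invoke is that the Set Transformer encoder (stacked self-attention / ISAB blocks without positional encodings) is permutation-equivariant — this is exactly the property stated earlier in the excerpt for attention-type blocks and is the Set Transformer's defining feature — so the encoder output rows are permuted by the same $P_\sigma$.

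The main work is the two generation formulas. For the weight matrix $W^\text{out}_t = \sum_k \bigl| {\bf Q}_{i,t}^{W,(k)} ({\bf K}_{i,t}^{W,(k)})^\top \bigr| / (d_h\sqrt{d^\text{mix}/d_h})$, I would argue that ${\bf Q}^{W,(k)}$ and ${\bf K}^{W,(k)}$ are obtained by applying the \emph{same} linear map to every row of the (permuted) encoder output, hence under $\sigma$ they become $P_\sigma {\bf Q}^{W,(k)}$ and $P_\sigma {\bf K}^{W,(k)}$; their outer product therefore becomes $P_\sigma ({\bf Q}^{W,(k)} ({\bf K}^{W,(k)})^\top) P_\sigma^\top$, the elementwise absolute value commutes with the permutation of entries, and the sum over heads $k$ preserves this, giving $W^\text{out}_t \mapsto P_\sigma W^\text{out}_t P_\sigma^\top$ — the right transformation for a mixing weight matrix whose row $j$ weights agent $j$'s $Q$-value. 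For the bias vector $b^\text{out}_t = \sum_k Q_{i,t}^{b,(k)} ({\bf K}_{i,t}^{b,(k)})^\top / (d_h\sqrt{d^\text{mix}})$, I would follow the Set Transformer decoder design noted in the text: the bias is produced by a (pooled / seed-vector) query attending over the $m$ permuted key rows, so the resulting length-$m$ vector is permuted by $P_\sigma$, i.e.\ $b^\text{out}_t \mapsto P_\sigma b^\text{out}_t$. Combining the encoder equivariance with these two observations gives the claim.

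The step I expect to be the genuine obstacle — rather than the routine algebra — is pinning down precisely which tensor indices are the ``set'' axis at each stage, because the notation in the section is terse: the formulas are written for a generic agent $a_i$ and it is not fully spelled out whether $m$ indexes rows of $Q^{W,(k)}$, columns of $K^{W,(k)}$, or both, nor exactly how the seed/pooled query in the bias branch is formed. I would therefore devote the first part of the proof to fixing these conventions (stating that the encoder consumes an $m\times(\cdot)$ per-agent matrix and that query/key projections are shared row-wise), after which the equivariance of each elementary operation — shared linear map, transpose-outer-product, elementwise $|\cdot|$, sum over heads, attention pooling with a fixed seed — is immediate, and equivariance of the composition follows. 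I would close by remarking that the bias being equivariant (not merely invariant) is what matches the per-agent structure of the mixing network, consistent with the paper's earlier ``permutation-free'' structural conditions on observations and actions.
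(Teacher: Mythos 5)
Your proposal is correct and follows essentially the reasoning the paper itself relies on (the paper states this proposition without a formal proof, justifying it only in prose via the Set Transformer encoder's equivariance and the seed-vector decoder): shared row-wise query/key projections give $\mathbf{Q}\mapsto P_\sigma\mathbf{Q}$, $\mathbf{K}\mapsto P_\sigma\mathbf{K}$, hence $W^{\text{out}}_t\mapsto P_\sigma W^{\text{out}}_t P_\sigma^\top$ since elementwise absolute value and the sum over heads commute with index permutation, and the fixed-seed query makes $b^{\text{out}}_t\mapsto P_\sigma b^{\text{out}}_t$, which is exactly the transformation law needed for the downstream invariance of the mixing network. Your explicit fixing of which axis is the ``set'' axis is a useful addition rather than a deviation; the only slight looseness is calling the bias branch ``attention pooling,'' since the formula emits the raw scaled scores as a length-$m$ equivariant vector rather than a pooled invariant one, but your stated conclusion for that branch is the correct one.
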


\subsection{Permutation-Invariant Mixing Network}

Since $W^\text{out}_t, b^\text{out}_t$ obtained using ST-HyperNet are permutation equivariant, we can compute a permutation-invariant joint-action value using the ST-HyperNet as follows:
\begin{align*}
 W^\text{1}_t, b^\text{1}_t &= \text{ST-HyperNet}^1 (S_t) \\
 W^\text{2}_t, b^\text{2}_t &= \text{ST-HyperNet}^2 (S_t) \\
Q_t &= 
\begin{bmatrix}
q_t^1 & \ldots & q_t^m
\end{bmatrix}^\top \in \mathbb{R}^{m} \\
H^{\rm mix}_t &= \text{ReLU}( Q_t W^\text{1}_t + b^\text{1}_t ) \in \mathbb{R}^{m}\\ 
q^{\rm joint}_t &= H^{\rm mix}_t W^\text{2}_t + b^\text{2}_t \in \mathbb{R}
\end{align*}

Simply speaking, since the weights and biases generated by the two ST-HyperNets depend on the order of the agents provided as input, the sequence of $q$-values follows the same order as the weights and biases. This ensures that the computed value of $H_t^{\rm mix}$ remains permutation-equivariant. Finally, the operations such as matrix multiplication and vector addition between $H_t^{\rm mix}$ and the weight matrix and bias vector, which are given in the same permutation, are always guaranteed to be permutation-invariant.

\begin{proposition}
The mixing network of \spectra framework is a permutation-invariant function.
\end{proposition}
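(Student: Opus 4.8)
The plan is to show permutation invariance of $q^{\rm joint}_t$ by tracking how a permutation $\sigma \in S_m$ of the agents propagates through each component of the mixing network and verifying that it cancels at the end. The key observation is that permuting the agents permutes the rows of $Q_t$ in exactly the same way that the ST-HyperNet permutes the rows (and columns) of the generated weight matrices and bias vectors, so the two effects compose to an invariant.

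First I would fix a permutation $\sigma \in S_m$ and let $P_\sigma \in \mathbb{R}^{m \times m}$ be its permutation matrix. When the agent ordering is permuted by $\sigma$, the stacked $q$-value vector becomes $P_\sigma Q_t$ (since $q_t^i$ is attached to agent $a_i$, and by the earlier proposition each $q_t^i$ is itself permutation-equivariant in its own observation ordering, hence well-defined per agent). Next, using the proposition that ST-HyperNet is permutation-equivariant, I would argue that the generated parameters transform as $W^1_t \mapsto P_\sigma W^1_t P_\sigma^\top$, $b^1_t \mapsto P_\sigma b^1_t$, $W^2_t \mapsto P_\sigma W^2_t$, and $b^2_t \mapsto b^2_t$ — that is, the first-layer weight matrix gets conjugated (rows and columns both permuted, because it maps the agent-indexed input to an agent-indexed hidden layer of the same size $m$), the first-layer bias gets its entries permuted, the second-layer weight (a column vector of length $m$) gets its entries permuted, and the scalar bias $b^2_t$ is unchanged. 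I would make the equivariance direction of ST-HyperNet precise here: its input is the state $S_t$ together with the per-agent state representations, and a permutation of the agents permutes those per-agent rows, so the outputs inherit the corresponding row/column permutations.

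Then I would substitute into the forward pass. The pre-activation of the hidden layer becomes $(P_\sigma Q_t)(P_\sigma W^1_t P_\sigma^\top) + P_\sigma b^1_t$; using $P_\sigma^\top P_\sigma = I$ this simplifies to $P_\sigma(Q_t W^1_t + b^1_t)$, and since ReLU acts entrywise it commutes with $P_\sigma$, giving $H^{\rm mix}_t \mapsto P_\sigma H^{\rm mix}_t$ — i.e., the hidden layer is permutation-equivariant. Finally $q^{\rm joint}_t \mapsto (P_\sigma H^{\rm mix}_t)^\top (P_\sigma W^2_t) + b^2_t = (H^{\rm mix}_t)^\top P_\sigma^\top P_\sigma W^2_t + b^2_t = (H^{\rm mix}_t)^\top W^2_t + b^2_t = q^{\rm joint}_t$, which is exactly permutation invariance. (One should be careful about whether $Q_t W^1_t$ is a row-vector/matrix product or $W^1_t Q_t$ a matrix/column-vector product; either convention works as long as the $P_\sigma$ and $P_\sigma^\top$ land on the correct sides, and I would state the convention explicitly at the start.)

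The main obstacle I anticipate is not the algebra — which is the routine cancellation above — but pinning down precisely the equivariance structure of the two ST-HyperNets, in particular justifying that the first-layer weight transforms by two-sided conjugation $P_\sigma(\cdot)P_\sigma^\top$ rather than a one-sided permutation. This requires looking at how ST-HyperNet indexes its output: the rows of $W^1_t$ are indexed by the hidden units (which are in bijection with agents) and the columns by the input $q$-values (also indexed by agents), and the Set Transformer decoder with a seed vector must be shown to produce output rows that follow the agent permutation. I would also need to confirm that the absolute-value operation on $W^1_t$ (and the monotonicity construction) commutes with the permutation, which it does since $|\cdot|$ is entrywise, so it introduces no complication. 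Assembling these facts, the invariance of $q^{\rm joint}_t$ follows immediately.
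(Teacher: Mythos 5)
Your argument is correct and follows essentially the same route as the paper, which only sketches this informally: the ST-HyperNet outputs and the stacked $q$-values undergo the same agent permutation, so the hidden layer is permutation-equivariant and the final layer's matched permutations cancel to give invariance. Your explicit tracking of the two-sided conjugation $P_\sigma W^1_t P_\sigma^\top$ (justified by the outer-product form ${\bf Q}({\bf K})^\top$ with both factors row-indexed by agents) and the entrywise commutation of $|\cdot|$ and ReLU with $P_\sigma$ makes rigorous exactly what the paper asserts.
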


\section{Experiments}
We evaluate our proposed SPECTra agent network and SPECTra mixing network on SMACv2 and GRF. We first evaluate our contribution to SMACv2, a successor to SMAC, a benchmark developed to improve the limitations of SMAC's limited stochasticity by using probabilistic unit sampling and initial state distribution. SMACv2 introduces probabilistic unit sampling and diverse initial state distributions, incorporating random team compositions, starting positions, and unit types. This increased stochasticity highlights the necessity of addressing the permutation problem, reinforcing our motivation.

To ensure that SMACv2's training difficulty increases with the number of agents, we turn off the SMACv2 action mask option to reduce the frequency of masked actions. Additionally, to account for the decentralized execution of CTDE, we set the probability of an agent observing an enemy within its sight range to $p = 1.0$. Under these settings, SMACv2 significantly increases training difficulty as the number of agents grows. To smooth the variability of the experimental results and clarify the trend, we applied an exponentially weighted moving average (EWMA) to the experimental results, with a smoothing factor of 0.99. Hyperparameters follow the HPN paper~\cite{hpn}, and detailed hyperparameter settings are described in Appendix~\ref{sec:hyperparameter}.

\subsection{Comparison with Baselines}

\begin{figure*}[t!] 
    \centering
    \includegraphics[width=1.\textwidth]{./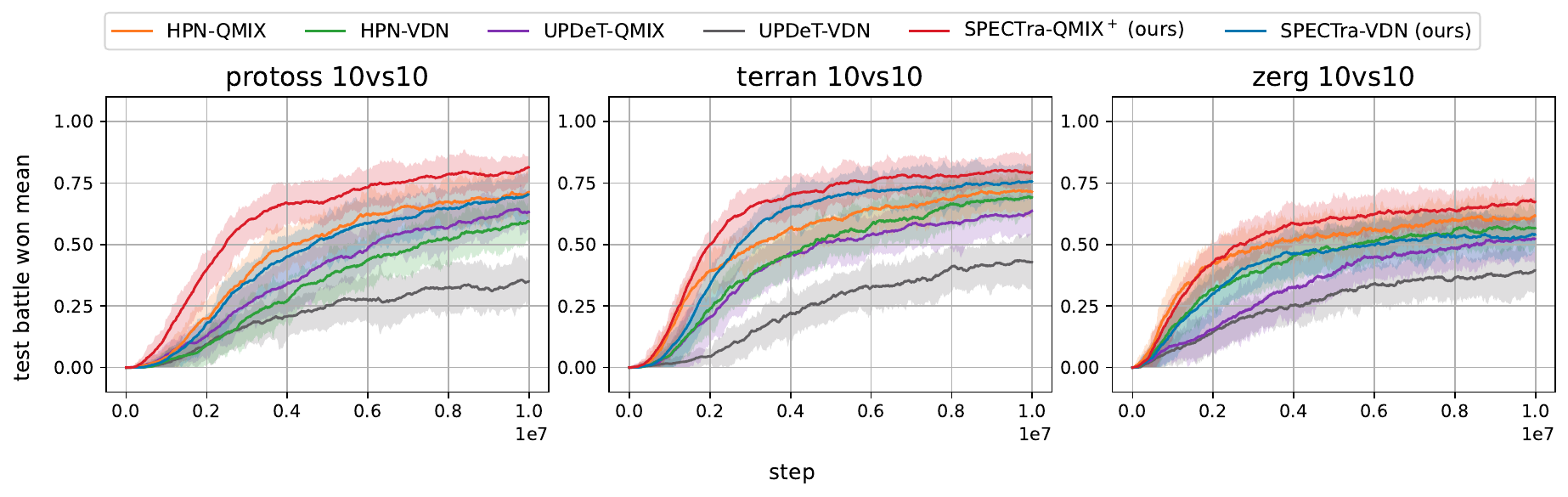}
    \includegraphics[width=1.\textwidth,trim={0 0 0 0cm},clip]{./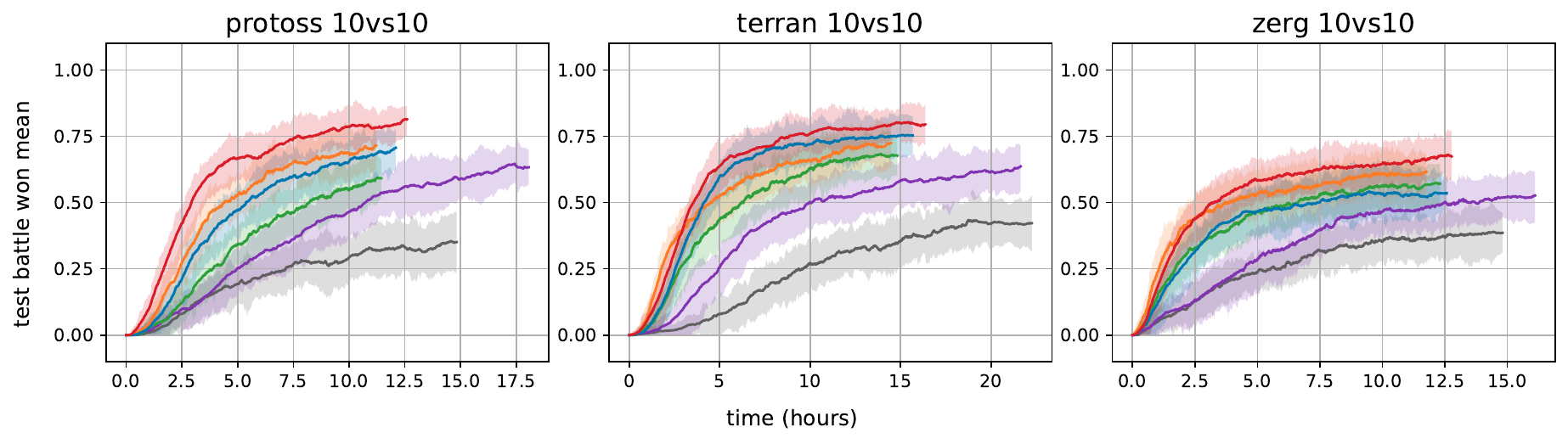} 
    \caption{Comparison of SPECTra-QMIX$^+$ and SPECTra-VDN against baselines in SMACv2.}
    \label{fig:wide}
\end{figure*}

To evaluate the performance of our proposed algorithm, we first compare the performance of HPN, UPDeT, and \spectra based on VDN and QMIX algorithms. The results of the experiments in the 10 vs 10 scenarios of SMACv2 are provided in Figure~\ref{fig:wide}.

The results exhibit that SPECTra-QMIX$^+$ and SPECTra-VDN show strong performance compared to the same QMIX and VDN-based models. SPECTra-QMIX and SPECTra-VDN train much faster than UPDeT-QMIX and UPDeT-VDN utilizing self-attention due to the low computational cost of SAQA. Moreover, as argued in Section~\ref{sec:st-hypernet}, the difference in computation time between QMIX and VDN-based algorithms is not significant, which proves that the amount of computation in the agent network has a significant impact on training time than the amount of computation in the mixing network. The comparison of inference time for each model as the number of agents increases is described in Appendix~\ref{sec:inference_time_comparison}. 

It is important to note that SPECTra-QMIX$^+$ achieved superior performance with significantly fewer parameters than the other baselines. Unlike the baselines, where the number of parameters in the network increases with the number of agents, SPECTra-QMIX$^+$ and SPECTra-VDN do not increase the number of parameters in the network as the number of agents to control increases. The number of parameters in SPECTra-QMIX$^+$ is about $50\%$ of UPDeT-QMIX and $33\%$ of HPN-QMIX, and this difference increases further as the number of agents increases. SPECTra-VDN also has about $48\%$ parameters than HPN-VDN in the above environment, but it performs competitively or is superior to HPN-VDN in most environments. UPDeT-VDN has fewer parameters than SPECTra-VDN, but its scalability is limited due to the sharp rise in computational cost. The parameters for each model can be found in Appendix~\ref{sec:model_parameters}. Meanwhile, despite having more parameters than SPECTra-QMIX$^+$, UPDeT-QMIX does not achieve better performance. The full experimental results can be found in the Appendix~\ref{sec:full_smacv2}. 

\subsection{Curriculum Learning}

\begin{figure*}[ht!]
    \centering
    \includegraphics[width=1.\linewidth]{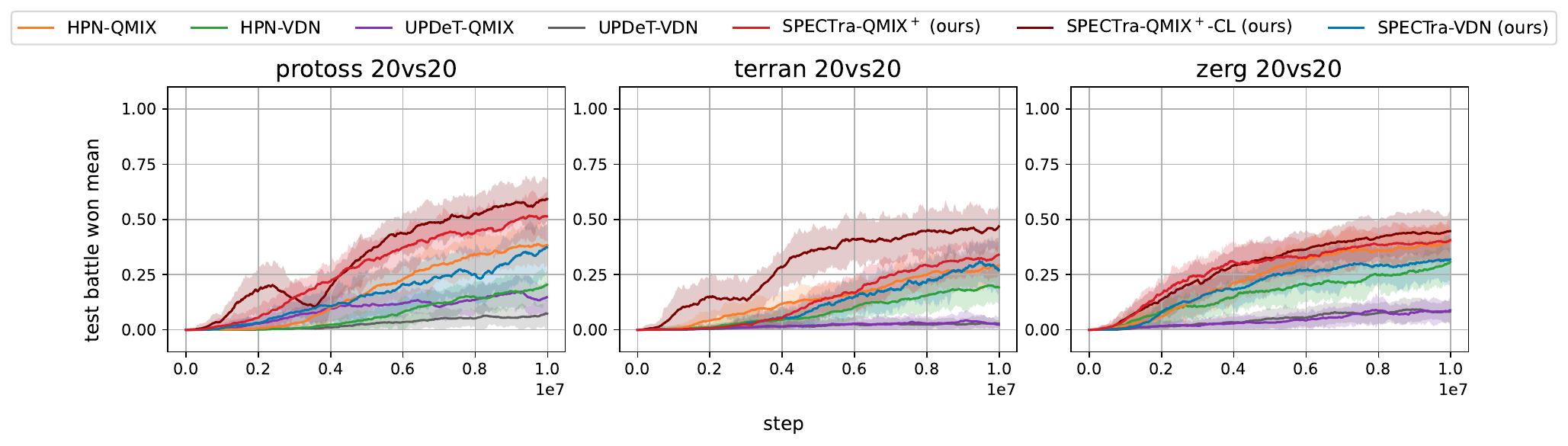}
    \caption{Comparison SPECTra-QMIX$^+$ and SPECTra-VDN with curriculum learning against baselines in SMACV2. }
    \label{fig:curriculum_learning}
\end{figure*}

\begin{figure}[ht!]
    \centering
    \includegraphics[width=0.8\linewidth]{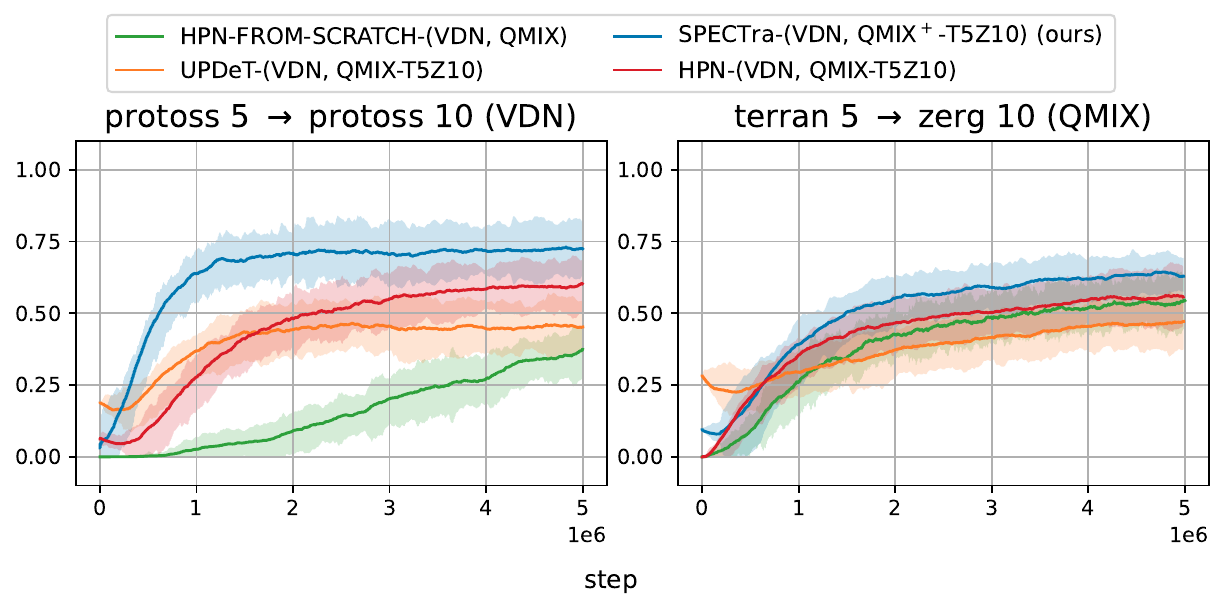}
    \caption{Comparison SPECTra-QMIX$^+$ and SPECTra-VDN with transfer learing against baselines in protoss 5 $\to$ protoss 10 and terran 5 $\to$ zerg 10.}
    \label{fig:transfer_learning}
\end{figure}

As shown in Table~\ref{tab:algorithms_specifications}, SPECTra-QMIX$^+$ is highly scalable, allowing it to adapt to larger environments without requiring additional parameters. This scalability makes it particularly well-suited for curriculum learning in settings with dynamically changing agent populations. While previous models like UPDeT have significantly improved transferability at the agent network level, they have not been extended to non-linear mixing networks. 

Specifically, $30\%$ of the total training occurs in a 5 vs 5 environment before transitioning to the target evaluation environment for the remaining $70\%$. We denote the variant of our model for curriculum learning by SPECTra-QMIX$^+$CL. As shown in Figure~\ref{fig:curriculum_learning}, SPECTra-QMIX$^+$CL achieves superior performance compared to the standard SPECTra-QMIX$^+$, demonstrating that even a simple curriculum learning strategy can enhance transferability.

Moreover, to evaluate the relative transfer ability of SPECTra-VDN and SPECTra-QMIX$^+$, we perform transfer learning from 5 vs 5 to 10 vs 10 to see how well SPECTra, HPN, and UPDeT generalize policies. Moreover, to evaluate the relative transfer ability of SPECTra-VDN and SPECTra-QMIX$^+$, we perform transfer learning from 5 vs 5 to 10 vs 10 to see how well SPECTra, HPN, and UPDeT generalize policies. To ensure that they understand games and not just unit-based repeated actions, we perform homology-based transfer learning and heterology-based transfer learning. The results in Figure~\ref{fig:transfer_learning} show that SPECTra-VDN and SPECTra-QMIX$^+$ perform well on homologous and heterologous transfer learning. This means that SPECTra-VDN and SPECTra-QMIX$^+$ can fully utilize the pre-trained models, which justifies the experimental results in Figure~\ref{fig:curriculum_learning}. 

In the SMACv2 20 vs 20 environments, SPECTra-QMIX$^+$ performs well in each environment despite having only $24\%$ parameters of HPN-QMIX. On the other hand, UPDeT-QMIX using self-attention shows a sharp drop in learning performance as the number of entities increases, despite having $3.4$ times more parameters than SPECTra-QMIX$^+$. This suggests that self-attention-based computation cannot cope with the exponential growth of complexity as the number of entities increases. 

\subsection{Google Research Football}

\begin{figure}[ht!]
    \centering
    \includegraphics[width=0.9\linewidth]{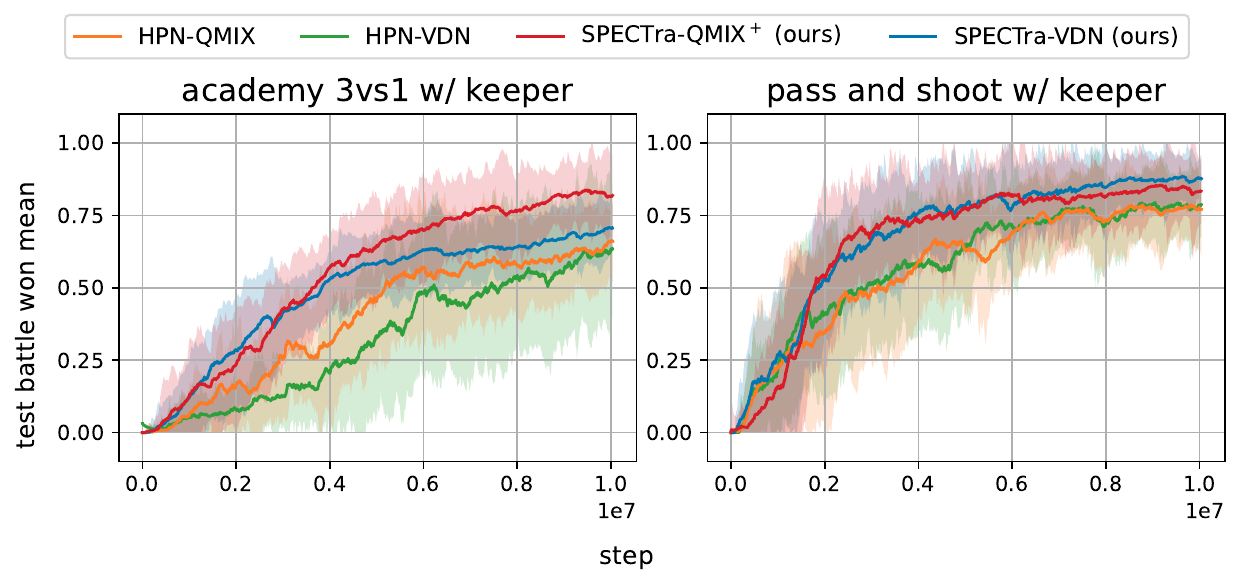}
    \caption{Comparison of SPECTra-QMIX$^+$ and SPECTra-VDN against baselines in Google Research Football.}
    
    \label{fig:GRF}
\end{figure}

To ensure the general performance of our algorithm, we conduct comparative experiments not only in the SMACv2 environment but also in the GRF environment. Unlike SMACv2, GRF controls all the agents of the home team except the goalkeeper and receive sparse team rewards only based on goal scoring. Moreover, addressing the permutation problem is crucial in GRF, as the vast action space leads to an enormous state space. To analyze the performance of SPECTra-QMIX$^+$ and SPECTra-VDN in GRF, we use HPN as the baseline, as HPN has demonstrated strong performance in this environment. 

Unlike SMACv2, GRF has a fixed action space, which requires different methods to solve the permutation problem. Since HPN does not currently provide an available implementation for GRF, we reproduce the algorithm as faithfully as possible based on the details available~\cite{hpn}. How we implemented algorithm in the GRF environment is described in detail in the Appendix~\ref{sec:grf_model}. 
To make SPECTra-QMIX$^+$ and SPECTra-VDN applicable to GRF, we do not use policy decoupling and apply mean pooling across agents for pass actions, which is problematic for the permutation problem, respectively. The experimental results in Figure~\ref{fig:GRF} demonstrate that both SPECTra-QMIX$^+$ and SPECTra-VDN achieve strong learning performance in GRF and further prove that SPECTra algorithms generally exhibit robust and effective performance in various MARL environments. The full experimental results can be found in the Appendix~\ref{sec:full_grf}.

\subsection{Ablation Study}
\label{sec:ablation_study}

We conduct ablation studies to evaluate the performance of each proposed component individually in SMACv2 environment. The full experimental results can be found in the Appendix~\ref{sec:ablation_appendix}.

\begin{figure}[ht!]
    \captionsetup{aboveskip=2pt, belowskip=0pt}
    \includegraphics[width=0.9\linewidth]{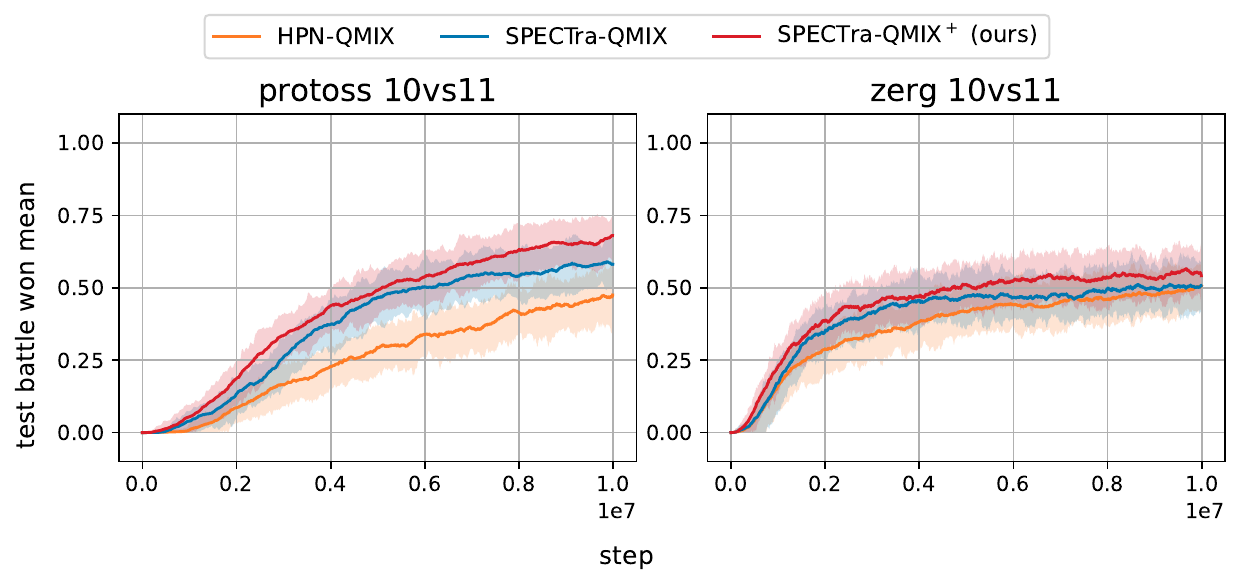}
    \caption{Experiments for the ablations of SAQA and ST-HyperNet modules.}
    
    \label{fig:main_ablation}
\end{figure}

\subsubsection{Main Ablations}
\label{sec:SAQA_HyperNet}

First, we analyze the effectiveness of the proposed agent networks and mixing networks in terms of performance. To evaluate the effectiveness of each module, we conduct an ablation study in a 10 vs 11 scenario across all races in SMACv2 (Protoss, Terran, and Zerg) using SPECTra-QMIX$^+$, HPN-QMIX, and SPECTra-QMIX, where SPECTra-QMIX is a model that changes the mixer used in HPN-QMIX to ST-HyperNet. The experimental results are presented in Figure~\ref{fig:main_ablation}.

By comparing the experimental results of SPECTra-QMIX and HPN-QMIX, we confirm that our agent networks outperform the agent networks in the original HPN model. This supports our claim that performing attention on each entity's observation embeddings is more effective than mean pooling. Furthermore, the comparison between SPECTra-QMIX$^+$ and SPECTra-QMIX demonstrates that the permutation-invariance property of our non-linear mixing network significantly contributes to improving model performance.

\begin{figure}[ht!]
    \captionsetup{aboveskip=2pt, belowskip=0pt}
    \includegraphics[width=1.\linewidth]{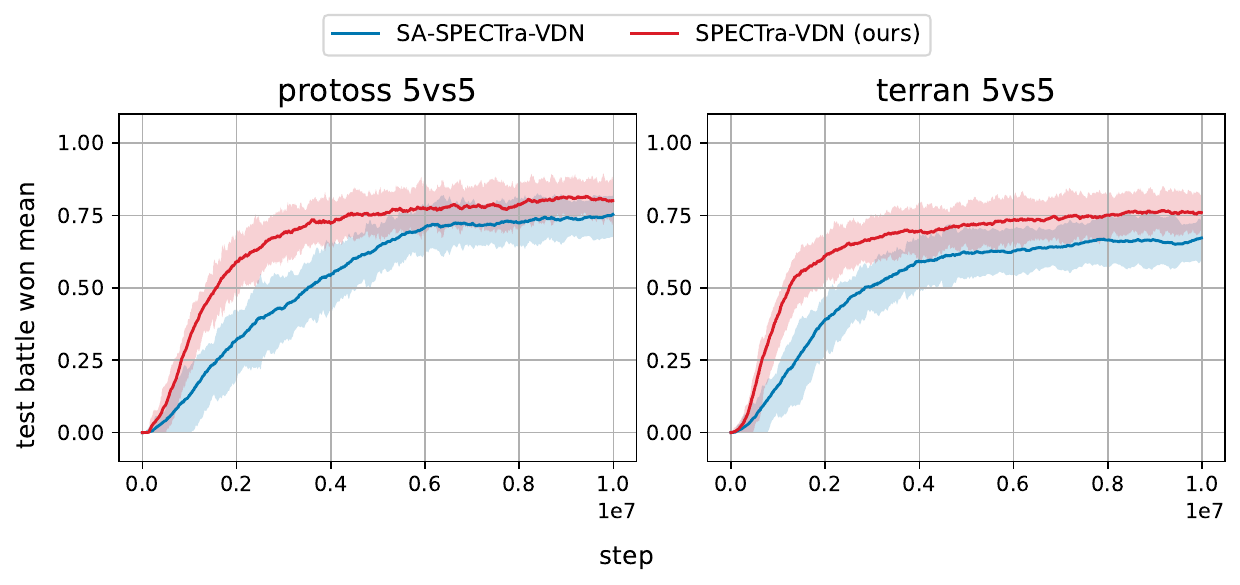}
    \caption{Comparison of SAQA and self-attention layers with mean pooling.}
    \label{fig:sa_vs_saqa_2races}
\end{figure}

\begin{figure}[ht!]
\centering
    \hspace{-0.4cm}
    \captionsetup{aboveskip=2pt, belowskip=2pt}
    \includegraphics[width=1.\linewidth,trim={0 0 12cm 0.3cm},clip]{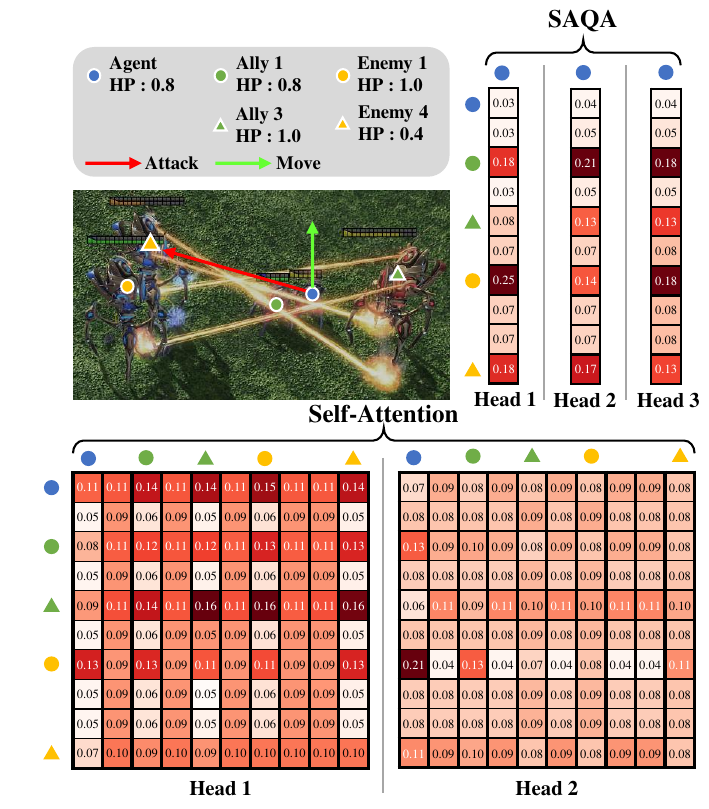}
    \caption{Example of attention maps computed from SAQA and self-attention layers (UPDeT) in SMACv2 environment.}
    
    \label{fig:attention_map}
\end{figure}

\subsubsection{SAQA vs Self-Attention}
\label{sec:sa_saqa}

Recall that we propose SAQA to address the high computational cost and sample complexity of conventional self-attention. Here, we examine whether SAQA achieves superior performance compared to self-attention in terms of computational efficiency and performance. To evaluate the architectural differences between these two modules, we conduct experiments in a 5 vs 5 scenario across all races using SPECTra-VDN and SA-SPECTra-VDN, where SA-SPECTra-VDN is a model derived from SPECTra-VDN by replacing SAQA with a self-attention layer. Figure~\ref{fig:sa_vs_saqa_2races} shows the experimental results.

Note that we use all observations along with the current agent information as queries, and the output embeddings of the self-attention are aggregated using a mean pooling operation. The experimental results demonstrate that self-attention unnecessarily increases sample complexity, whereas SAQA achieves higher performance with significantly lower computational cost.

We have conducted a qualitative analysis to examine how SAQA and self-attention layers interpret input observations differently. Figure~\ref{fig:attention_map} compares the attention scores of SAQA and self-attention when an observation from SMACv2 is input into \spectra and UPDeT. To analyze the response to entities that were not observed and thus processed as zero vectors, we trained the models without applying masks to any attention. While \spectra selected the attack action, UPDeT chose the move action. In the given state, the optimal action of the agent is to attack and eliminate Enemy 4 as quickly as possible since the enemy entity has significantly lower HP compared to Enemy 1. 
SAQA utilizes residual connections, allowing it to pay less attention to the query’s own information. This enables each head to focus on different aspects of the four observed objects. This suggests that all heads clearly understand the subject of the observation, indicating that the model can learn a unidirectional relationship centered on the observer. In contrast, UPDeT employs attention by using each entity's vector as the basis for selecting an action corresponding to that entity. The most crucial information for selecting an action should be the relationship between the observer agent and the given entity. To compute the action value for Enemy 4, the model should assign a high attention score to the relationship between the agent and Enemy 4. However, UPDeT tends to focus more on Ally 3 than on Enemy 4. This suggests that UPDeT struggles to prioritize the complex relationships among entities.

\section{Conclusions}

In this work, we have proposed a MARL framework called the \spectra that leverages Transformer-based neural architecture to address the scalability and permutation problems. By utilizing SAQA, we have mitigated the high computational cost of self-attention layers while maintaining the ability to learn complex interactions between many entities. Our experimental results confirm that SAQA accelerates the training process and achieves superior performance compared to baseline approaches. Additionally, we have successfully addressed the scalability and permutation problems in non-linear mixing networks by introducing the Set Transformer-inspired hypernetwork called the ST-HyperNet.

Furthermore, the \spectra framework has demonstrated strong applicability and generality by the results of transfer learning and curriculum learning. Our study presents an efficient way to adopt the well-known attention mechanism in highly complex MARL scenarios, and it holds the potential to be extended to real-world scale environments that require the capability to model the interactions among a significant number of agents.

\bibliographystyle{ACM-Reference-Format}
\bibliography{marl}

\newpage
\onecolumn
\appendix

\section{Additional Experiments}

\subsection{Comparison of Inference Time}
\label{sec:inference_time_comparison}
Figure~\ref{fig:inference_time} compares the inference time of each algorithm with 1,000 samples and shows that SAQA requires significantly less computation time than self-attention, supporting the results in Figure~\ref{fig:wide}. It is faster than UPDeT and outperforms HPN, and we confirm that this trend is consistent as the number of agents increases. 

\begin{figure}[ht!]
    \centering
    \includegraphics[width=1.\linewidth]{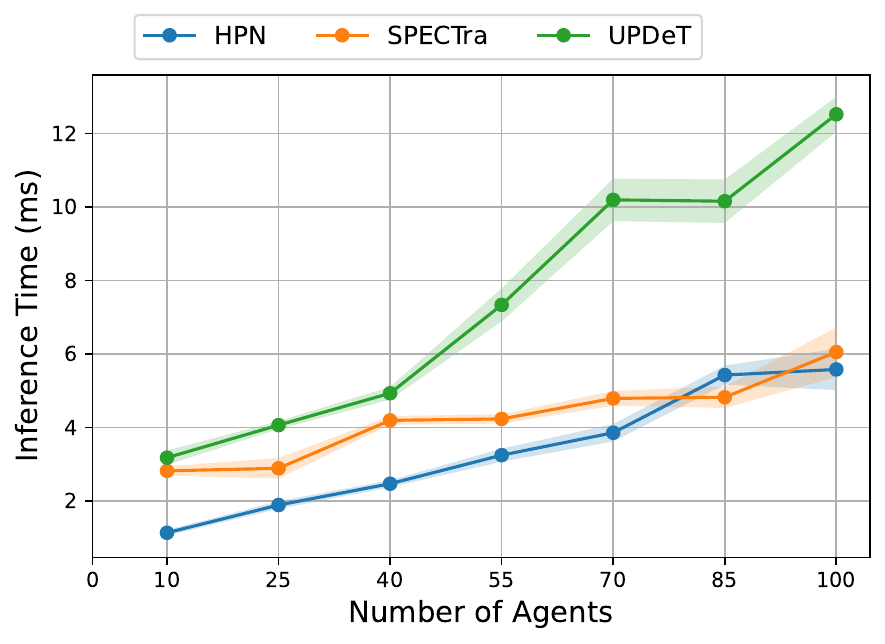}
    \caption{Comparison of inference time between the proposed \spectra and baseline methods.
    }
    
    \label{fig:inference_time}
\end{figure}

\subsection{SMACv2 Experiments}
\label{sec:full_smacv2}
Figure~\ref{fig:full_main_experiment} compares SPECTRa and other baseline algorithms in SMACv2 environment. SPECTRA-QMIX$^+$ shows outstanding performance in overall experiments.
\label{sec:smacv2_additional_experiments}

\begin{figure}[ht!]
    \includegraphics[width=1.\linewidth]{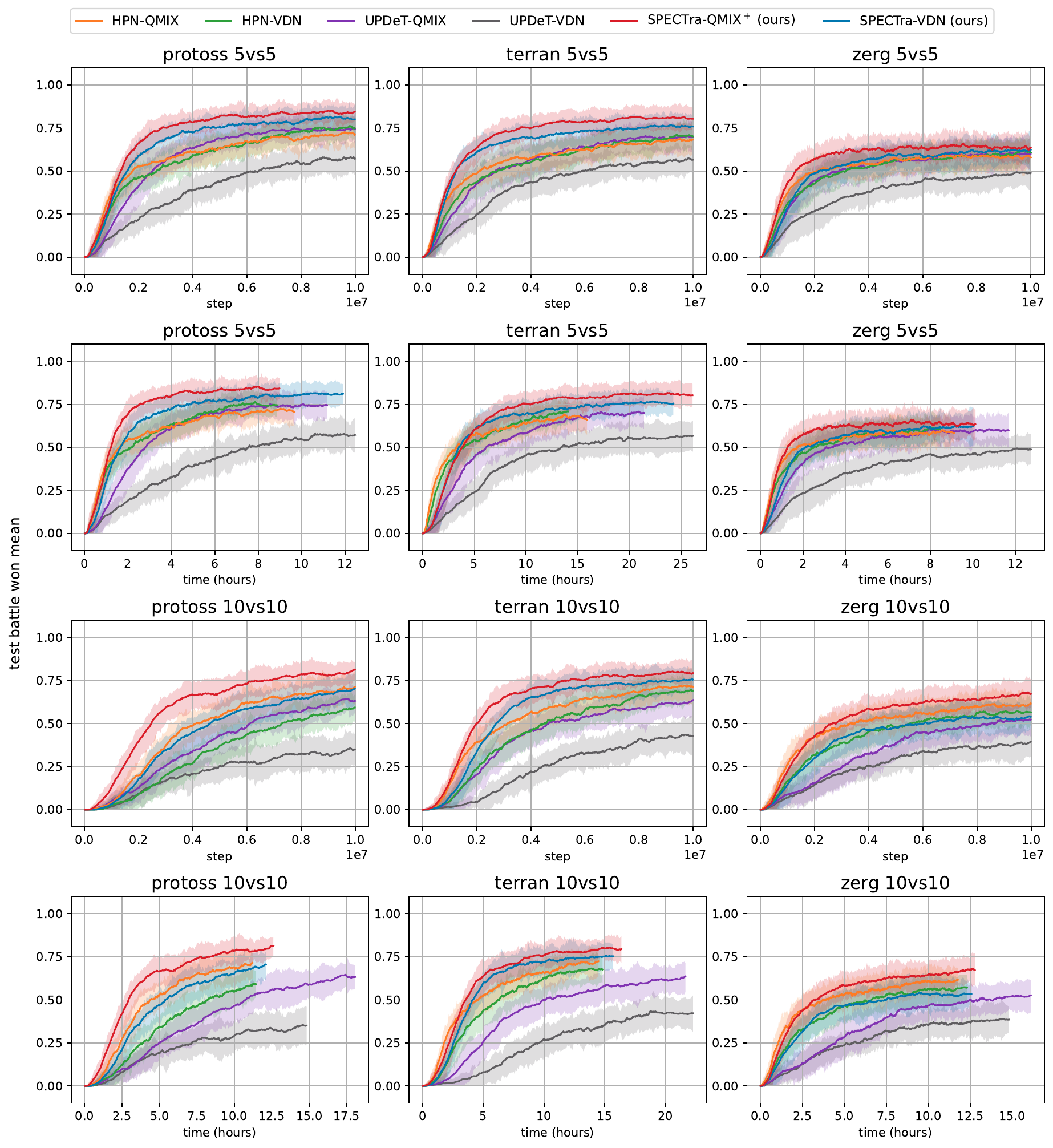}
    \includegraphics[width=1.\linewidth]{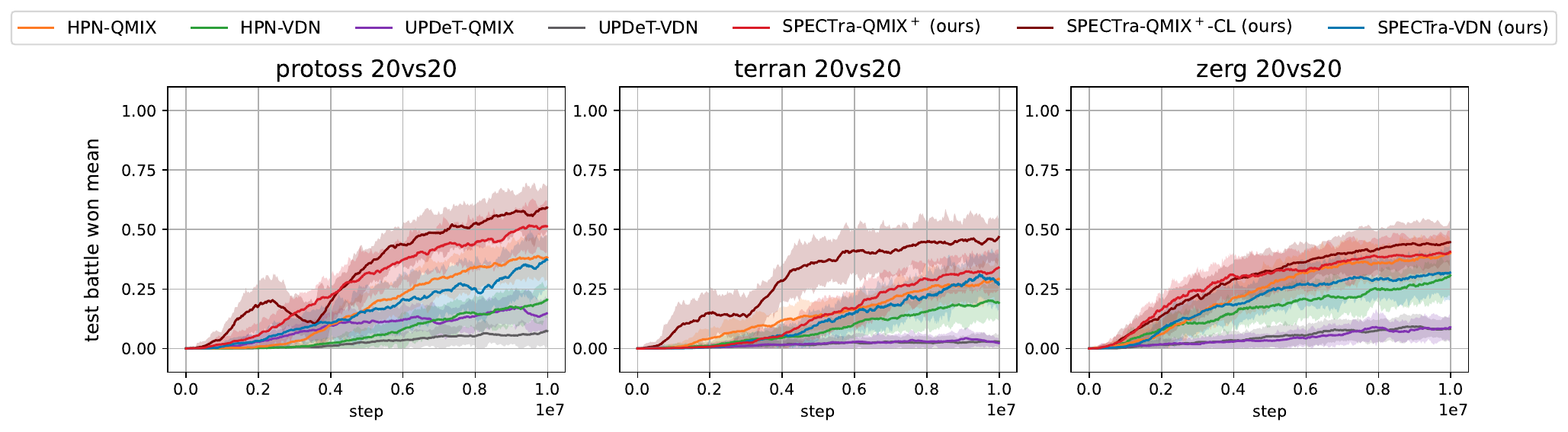}
    
    \caption{Overall comparison over SMACv2 challenges 5 vs 5, 10 vs 10, 20 vs 20.
    }
    \label{fig:full_main_experiment}
    
\end{figure}

\subsection{GRF Experiments}
\label{sec:full_grf}
Figure~\ref{fig:full_grf_experiments} shows the comparison between SPECTra and HPN. Each of them is combined to QMIX or VDN. Used scenarios in the figure are \textit{academy 3 vs 1 with keeper}, \textit{academy pass and shoot with keeper}, and \textit{academy run pass and shoot with keeper}, respectively.

\begin{figure}[ht!]
    \includegraphics[width=1.\linewidth]{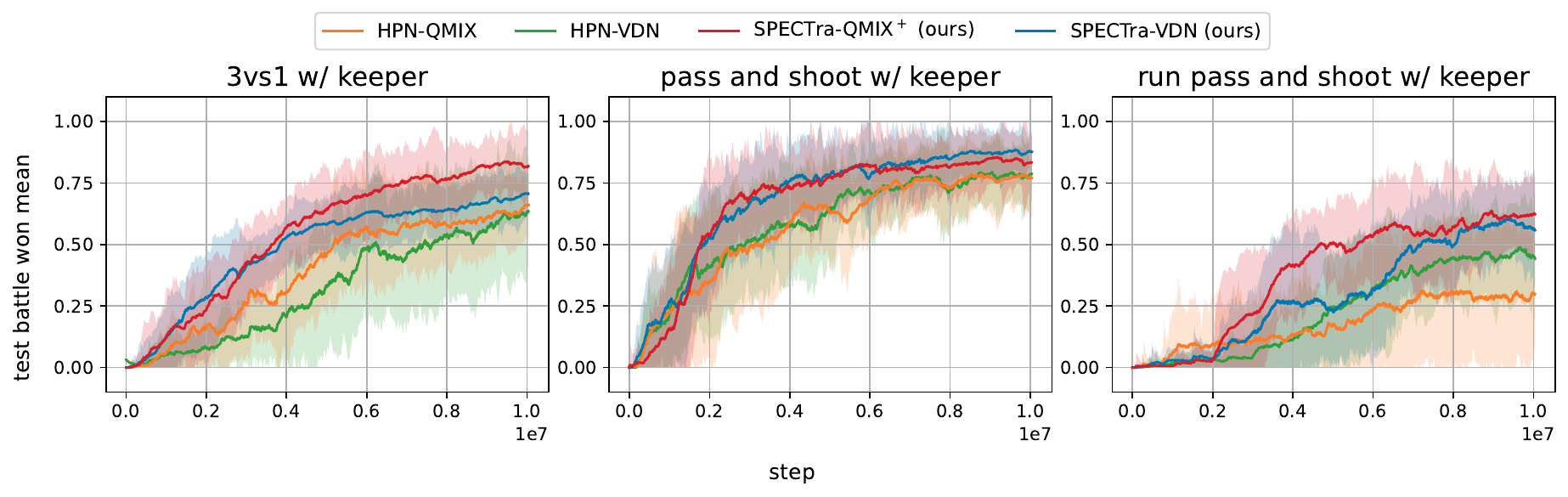}
    \caption{
    Comparison of SPECTra-QMIX$^+$ and SPECTra-VDN at the 3 maps against baselines in Google Research Football. 
    }
    \label{fig:full_grf_experiments}
\end{figure}

\subsection{Ablation Experiments}
\label{sec:ablation_appendix}
To understand how each module of the proposed algorithm contributes to performance, we conducted two ablation studies. Figure~\ref{fig:main_ablation} confirms that the proposed agent network and mixing network contribute to performance improvement. Additionally, Figure~\ref{fig:sa_vs_saqa} shows that SAQA is more effective than embedding generation using conventional self-attention and directly impacts performance.
\begin{figure}[h]
    \includegraphics[width=1.\linewidth]{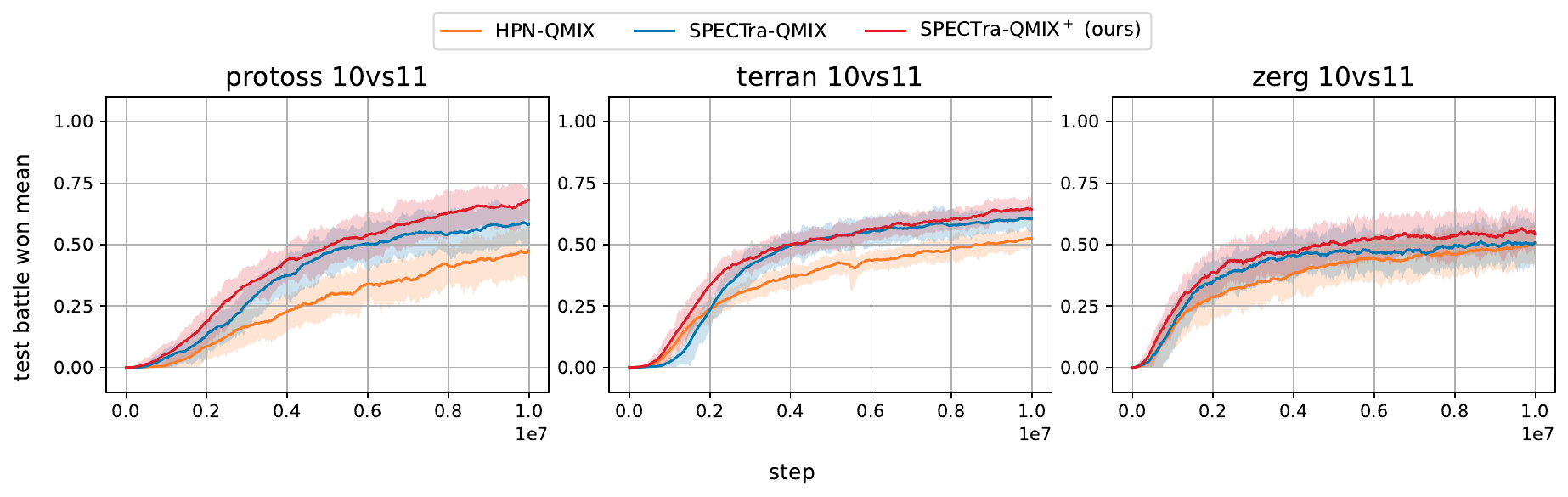}
    \caption{Ablation experiments for SAQA and ST-HyperNet modules.}
    
    \label{fig:main_ablation}
\end{figure}

\begin{figure}[h]
    \includegraphics[width=1.\linewidth]{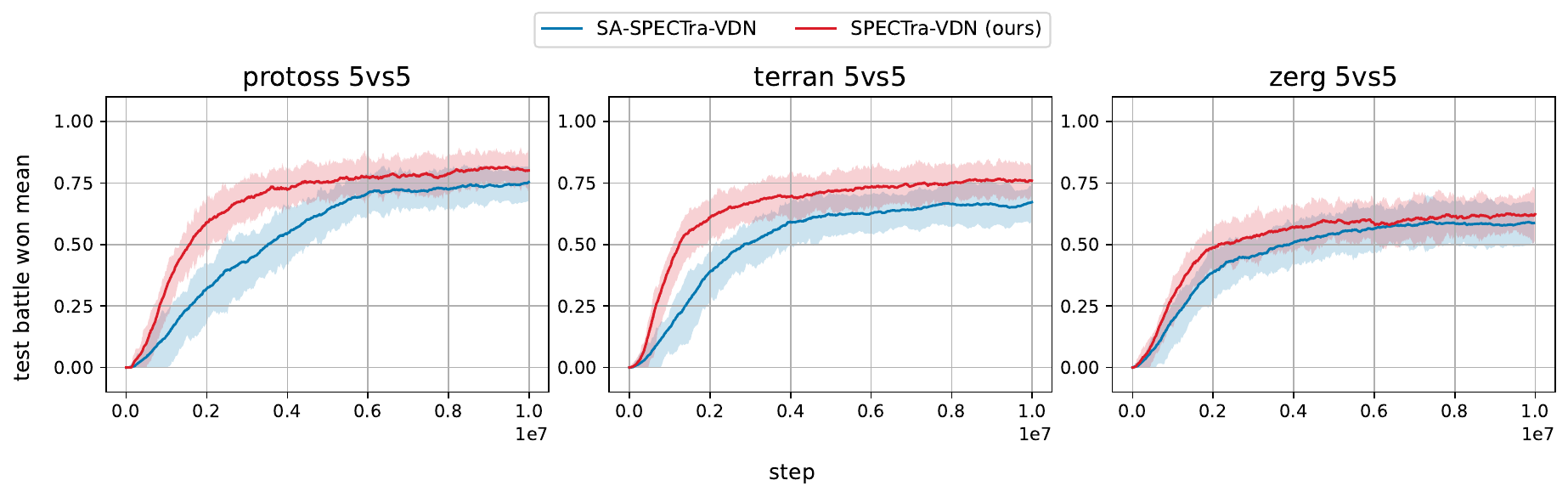}
    \caption{Comparison of SAQA and self-attention layers with mean pooling in 3 races.
    }
    \label{fig:sa_vs_saqa}
   
\end{figure}

\section{Hyperparameter}
\label{sec:hyperparameter}
During experiments, the implementations of baseline methods are consistent with their official
repositories. We list the detailed hyperparameter settings we used in our SMACv2, GRF, and Ablation study in the paper in the Table~\ref{tab:smacv2_hyperparameters} $\sim$ \ref{tab:ablation_SA_SAQA_hyper} below. 

\subsection{Hyperparameters in SMACv2 Experiments}
\label{sec:smacv2_hyperparameter}
In SMACv2, the batch\_size is set differently depending on the total number of entities to manage memory stably. Specifically, 128 for 5 vs 5 environments, 32 for 10vs10, and 10 for 20vs20. In addition, SPECTra-QMIX$^+$ uses the attention scores of ST-HyperNet as the weight and bias of the mixing network, and these values change dynamically with the number of agents. Therefore, the number of agents dynamically determines the mixing\_embed\_dim.

\begin{table}[ht!]
    \centering
    \resizebox{\textwidth}{!}{
        \begin{tabular}{l cccccc}
            \toprule
            \bf{Hyperparameter}  & \bf{SPECTra-QMIX$^+$} & \bf{HPN-QMIX}  & \bf{UPDeT-QMIX} & \bf{SPECTra-VDN} & \bf{HPN-VDN} & \bf{UPDeT-VDN}\\ \midrule
            mac    & ss\_mac & hpn\_mac & updet\_mac & ss\_mac & hpn\_mac & updet\_mac\\  
            agent  & ss\_rnn & hpns\_rnn & updet\_agent & ss\_rnn & hpns\_rnn & updet\_agent\\
            learner & nq\_learner & nq\_learner & nq\_learner & nq\_learner & nq\_learner & nq\_learner \\
            mixer & ss\_mixer & qmix & qmix & vdn & vdn & vdn \\
            hidden\_size & 64 & 64 & 32 & 64 & 64 & 32 \\
            n\_head & 4 & - & 3 & 4 & - & 3\\
            transformer\_depth & 1 & - & 2 & 1 & - & 2 \\
            mixing\_n\_head & 4 & - & - & - & - & -\\
            mixing\_embed\_dim  & number of agents & 32 & 32 & - & - & -\\
            hypernet\_embed  & 32 & 64 & 64 & - & - & -\\
            batch\_size & 128 & 128 & 128 & 128 & 128 & 128\\
            \bottomrule
        \end{tabular}
    }
    \caption{Hyperparameter Settings of SMACv2 Experiment Algorithms.}
    \label{tab:smacv2_hyperparameters}
\end{table}
\subsection{Hyperparameters in GRF Experiments}
GRF is one of the most challenging environments for MARL benchmarks due to its score-based sparse rewards and long-term episodes. If we use only the score-based reward function of the environment, the over-exploration will significantly degrade training speed and performance. So, it is necessary to customize and override the reward function to prevent over-exploration. Also, since the action space differs depending on the environment, each model should use an optimized action sampling method for solving the permutation problem according to the environment. Since implementing this part of the HPN~\cite{hpn} paper is not publicly available in code, we designed a GRF training based on the GitHub project published by CDS~\cite{cds}. We implemented a model that follows the architecture described in the HPN~\cite{hpn} paper. We also tried accelerating training by using additional action masks and reward functions. All implementations are 100\% publicly available on our GitHub project. Below is the implementation of the hyperparameter table we used to train the model: \ref{tab:grf_hyperparameters}. 

\begin{table}[ht!]
    \centering
    \resizebox{\textwidth}{!}{
        \begin{tabular}{l cccc}
            \toprule
            \bf{Hyperparameter}  & \bf{SPECTra-QMIX$^+$} & \bf{HPN-QMIX} & \bf{SPECTra-VDN} & \bf{HPN-VDN}  \\ \midrule
            mac    & ss\_mac & hpn\_mac  & ss\_mac & hpn\_mac \\  
            agent  & ss\_rnn & hpns\_rnn  & ss\_rnn & hpns\_rnn \\
            learner & nq\_learner & nq\_learner & nq\_learner & nq\_learner \\
            mixer & ss\_mixer & qmix  & vdn & vdn \\
            hidden\_size & 64 & 64 & 64 & 64 \\
            n\_head & 4 & - & 4 & - \\
            transformer\_depth & 1 & -  & 1 & - \\
            mixing\_n\_head & 4 & -  & - & -\\
            mixing\_embed\_dim  & - & 32  & - & -\\
            hypernet\_embed  & 32 & 64  & - & - \\
            batch\_size & 32 & 32 & 32  & 32\\
            \bottomrule
        \end{tabular}
    \label{tab:grf_hyperparameters}
    }
    \caption{Hyperparameter Settings of Google Research Football Experiment.}
\end{table}

\subsection{Hyperparameters in Ablation Studies}
\label{sec:ablation_hyperparameter}
We have conducted ablation studies to evaluate the performance of each proposed component individually. As discussed in Section~\ref{sec:ablation_study}, each ablation study demonstrates that the proposed components contribute to the performance of our model in a positive manner. Unless otherwise mentioned, the hyperparameters used in the experiments follow the values provided in Appendix~\ref{sec:smacv2_hyperparameter}. The hyperparameter settings for the ablation studies are presented in Tables~\ref{tab:ablation_SAQA_HyperNet_hyper} to \ref{tab:ablation_SA_SAQA_hyper}.

\begin{table}[ht!]
    \centering
    \resizebox{\textwidth}{!}{ 
    \begin{tabular}{l ccc}
        \toprule
        \bf{Hyperparameter}       & \bf{SPECTra-QMIX$^+$} & \bf{SPECTra-QMIX} & \bf{HPN-QMIX}  \\ 
        \midrule
        mixer                     & ss\_mixer     & qmix              & qmix           \\
        n\_head                   & 4             & 4                 & -              \\
        transformer\_depth        & 1             & 1                 & -              \\
        mixing\_n\_head           & 1             & -                 & -              \\
        mixing\_embed\_dim        & -             & 32                & -              \\
        hypernet\_embed           & 32            & 64                & -              \\
        batch\_size               & 32            & 32                & 32             \\
        \bottomrule
    \end{tabular}
    \label{tab:ablation_SAQA_HyperNet_hyper}
    }
    \caption{Hyperparameter Settings of Ablation~\ref{sec:SAQA_HyperNet} Experiment Algorithms.}
\end{table}

\begin{table}[ht!]
    \centering
    \resizebox{\textwidth}{!}{
    \begin{tabular}{l cc}
        \toprule
        \bf{Hyperparameter}  & \bf{SPECTra-VDN} & \bf{SA-SPECTra-VDN} \\ 
        \midrule
        mixer               & vdn  & vdn  \\
        n\_head            & 4    & 4    \\
        transformer\_depth & 1    & 1    \\
        batch\_size        & 128  & 128  \\
        use\_SAQA          & \text{True}  & \text{False} \\
        \bottomrule
    \end{tabular}
    }
    \caption{Hyperparameter Settings of Ablation~\ref{sec:sa_saqa} Experiment Algorithms.}
    \label{tab:ablation_SA_SAQA_hyper}
\end{table}

\section{Model Parameters}
\label{sec:model_parameters}

In this section, we compare the number of parameters in the models used in our experiments.

\subsection{Model Parameters in SMACv2 Experiments}
\label{sec:smacv2_parameters}

In the Protoss scenario, ``shield" features are added to the feature information, resulting in different feature dimensions compared to scenarios of other races, which leads to a different number of parameters. However, since Terran and Zerg share the same feature dimensions, their architectures have the same number of parameters. As shown in Table~\ref{tab:smacv2_parameters}, both SPECTra-QMIX$^+$ and SPECTra-VDN maintain the same number of parameters regardless of the number of agents and enemies, making them scalable. In contrast, HPN-QMIX and HPN-VDN are not scalable, as the architecture of HPN is dependent on the number of agents. For UPDeT, while UPDeT-VDN remains scalable as we simply add $Q$-values in VDN, UPDeT-QMIX is not scalable as its mixing network is still dependent on the number of agents and its number of parameters also varies with the number of entities.

\begin{table}[ht!]
\label{sec:number-of-parameters}
    \centering
    \resizebox{\textwidth}{!}{

        \begin{tabular}{cccccccc}
            \toprule
            \bf{Race} &\bf{Map}  & \bf{SPECTra-QMIX$^+$} & \bf{HPN-QMIX}  & \bf{UPDeT-QMIX} & \bf{SPECTra-VDN} & \bf{HPN-VDN} & \bf{UPDeT-VDN}\\ 
            \midrule
            \multirow{3}{*}{\centering Protoss}
            & 5 vs 5   & 72.90 & 144.488 & 80.903  & 48.326 & 106.823 & 43.238 \\
            & 10 vs 10 & 72.90 & 189.768 & 125.863 & 48.326 & 107.143 & 43.238 \\
            & 20 vs 20 & 72.90 & 309.128 & 244.583 & 48.326 & 107.783 & 43.238 \\
            \hdashline
            \multirow{3}{*}{\centering Terran} 
            & 5 vs 5   & 72.582 & 134.056 & 78.951  & 48.134 & 98.311  & 43.206 \\
            & 10 vs 10 & 72.582 & 177.416 & 121.991 & 48.134 & 98.631  & 43.206 \\
            & 20 vs 20 & 72.582 & 292.936 & 236.871 & 48.134 & 99.271  & 43.206 \\
            \hdashline
            \multirow{3}{*}{\centering Zerg} 
            & 5 vs 5   & 72.582 & 134.056 & 78.951  & 48.134 & 98.311  & 43.206 \\
            & 10 vs 10 & 72.582 & 177.416 & 121.991 & 48.134 & 98.631  & 43.206 \\
            & 20 vs 20 & 72.582 & 292.936 & 236.871 & 48.134 & 99.271  & 43.206 \\
            \bottomrule
        \end{tabular}
    
    }
    \label{tab:smacv2_parameters}
    \caption{Number of parameters each algorithm has according to the SMACv2 maps (in thousands, k).}
\end{table}

\subsection{Model Parameters in GRF Experiments}
\label{sec:grf_parameters}

Due to differences in action space, GRF and SMACv2 require separately designed layers for estimating action values. To ensure effective training, we adapt the models as described in Appendix~\ref{sec:grf_model}. Table~\ref{tab:smacv2_parameters} below presents the number of parameters for the models implemented in GRF.

\begin{table}[ht!]
\label{sec:number-of-parameters}
    \centering
    \resizebox{\textwidth}{!}{ 
        \begin{tabular}{ccccc}
            \toprule
            \bf{Scenario} & \bf{SPECTra-QMIX$^+$} & \bf{HPN-QMIX} & \bf{SPECTra-VDN} & \bf{HPN-VDN} \\ 
            \midrule
            \centering 3 vs 1 w/ keeper
            & 72.844 & 86.901 & 56.787 & 73.364 \\
            \hdashline
            \centering pass and shoot w/ keeper
            & 72.844 & 84.053 & 56.787 & 73.364 \\
            \hdashline
            \centering run pass and shoot w/ keeper
            & 72.844 & 84.053 & 56.787 & 73.364 \\
            \bottomrule
        \end{tabular}
    }
    \caption{Number of parameters each algorithm has according to the GRF scenarios (in thousands, k).}
    \label{tab:grf_parameters}
\end{table}

\section{Model Adaptations for Google Research Football}
\label{sec:grf_model}
This section describes how the experimental algorithms were implemented in the GRF benchmark. Unlike SMACv2, where an action is performed by identifying a specific entity, GRF does not allow agents to control which ally to pass the ball to directly. The GRF has a fixed number of actions, so we must change how the agent network outputs actions to address scalability and permutation issues. First, to implement HPN-QMIX and HPN-VDN, we need to consider the action permutations. GRF has 19 actions, including permutation-invariant actions such as moving, sliding, and shooting, and permutation-equivariant actions such as long pass, high pass, and short pass. We apply the max pooling operation to all friendly-related action values to obtain the final q-values for the three pass actions. To implement SPECTra-QMIX$^+$ and SPECTra-VDN, we remove the policy decoupling modules and apply the mean pooling operation for the three pass actions to obtain their final q-values. The above implementation is 100\% public.
\section{Computation Resource}

Experiments were conducted on the Intel(R) Xeon(R) Gold 6240R CPU @ 2.40GHz 24 core processor with 768GB RAM and NVIDIA A10 D6 24GB.

\end{document}